\def\maketag@@@#1{\hbox{\m@th\normalfont\normalsize#1}}
\newtheorem{theorem}{Theorem}
\newtheorem{lemma}{Lemma}
\newtheorem{corollary}{Corollary}
\newcommand{\expt}[2]{\ensuremath{\underset{{#1}}{\mathbb E}{[{#2}]}}}
\newcommand{\prob}[1]{\ensuremath{\mathbb{P}({#1})}}
\newcommand{\entrp}[1]{\ensuremath{\mathcal{H}({#1})}}
\newcommand{\prtn}[4]{\ensuremath{{Z}^{#1_{#2}|#3_{#4}}}}
\title{\LARGE \bf
Measurement Simplification in $\rho$-POMDP with Performance Guarantees
}
\author{Tom Yotam and Vadim Indelman 
\thanks{Tom Yotam is with the Facaulty of Mathematics, Technion - Israel Institute of Technology, Haifa 32000,
	Israel, {\tt tomyo@campus.technion.ac.il}. Vadim Indelman is with the Department of Aerospace Engineering, Technion - Israel Institute of Technology, Haifa 32000, Israel. {\tt vadim.indelman@technion.ac.il}. This work was  partially supported by the Israel Science Foundation (ISF), US NSF/US-Israel BSF,
	 and the Israeli Smart Transportation Research Center (ISTRC).}%
}
\begin{document}

\maketitle
\thispagestyle{empty}
\pagestyle{empty}

\begin{abstract}
Decision making under uncertainty is at the heart of any autonomous system acting with imperfect information. The cost of solving the decision making problem is exponential in the action and observation spaces, thus rendering it unfeasible for many online systems. This paper introduces a novel approach to efficient decision-making, by partitioning the high-dimensional observation space. Using the partitioned observation space, we formulate analytical bounds on the expected information-theoretic reward, for general belief distributions. These bounds are then used to plan efficiently while keeping performance guarantees. We show that the bounds are adaptive, computationally efficient, and that they converge to the original solution. We extend the partitioning paradigm and present a hierarchy of partitioned spaces that allows greater efficiency in planning. We then propose a specific variant of these bounds for Gaussian beliefs and show a theoretical performance improvement of at least a factor of 4. Finally, we compare our novel method to other state-of-the-art algorithms in active SLAM scenarios, in simulation and in real experiments. In both cases we show a significant speed-up in planning with performance guarantees.
\end{abstract}

\section{INTRODUCTION}
Autonomous agents must operate with imperfect information about their environments, dynamics, and measurements. \emph{Belief Space Planning} (BSP) is one of the fundamental problems one must solve for these autonomous agents to interact with the environment successfully. The modeling of the problem is such that we maintain a probability density function over the true state of the agent, which is unknown. We then reason about the evolution of this distribution in the future for different actions and possible observations. BSP uses two main models to evolve the distribution over the state:  motion and measurement models. The two models often have very distinctive and different effects. The motion model introduces noise by state uncertainty. The measurement model on the other hand, although noisy by itself, usually provides valuable information about the agent's pose and the map of the environment, which in turn reduces state uncertainty. 

One advantage of using the BSP formulation is the capacity to incorporate belief dependent reward functions, particularly, information-theoretic rewards, which is important for tasks such as; search and rescue, informative path planning and active classification. The ability to measure belief uncertainty and reduce it is key in solving such tasks, however, it comes with added computational complexity, especially when the observation space is high-dimensional. In a typical active \emph{Simultaneous Localization And Mapping} (SLAM) setting, future observations may include hundreds or even thousands of landmarks. Moreover, in a visual-based POMDP setting, even a single observation can be high-dimensional when the measurement model uses raw image inputs.

Solving the corresponding POMDP problem involves reasoning about different actions or policies, and for each, account for different possible observations. This leads to an exponential growth of the posterior beliefs, which in turn makes the planning problem NP hard\cite{Papadimitriou87math}. 

A possible approach to addressing this issue is to simplify the planning problem. One specific simplification method of interest, is forming analytical bounds on the expected reward and using the bounds to decide on the optimal action. Given that the bounds are easier to calculate than the expected reward, planning becomes more efficient.  

\begin{figure} [!t]
	\hspace*{-4pt}
	\includegraphics[scale=0.4]{./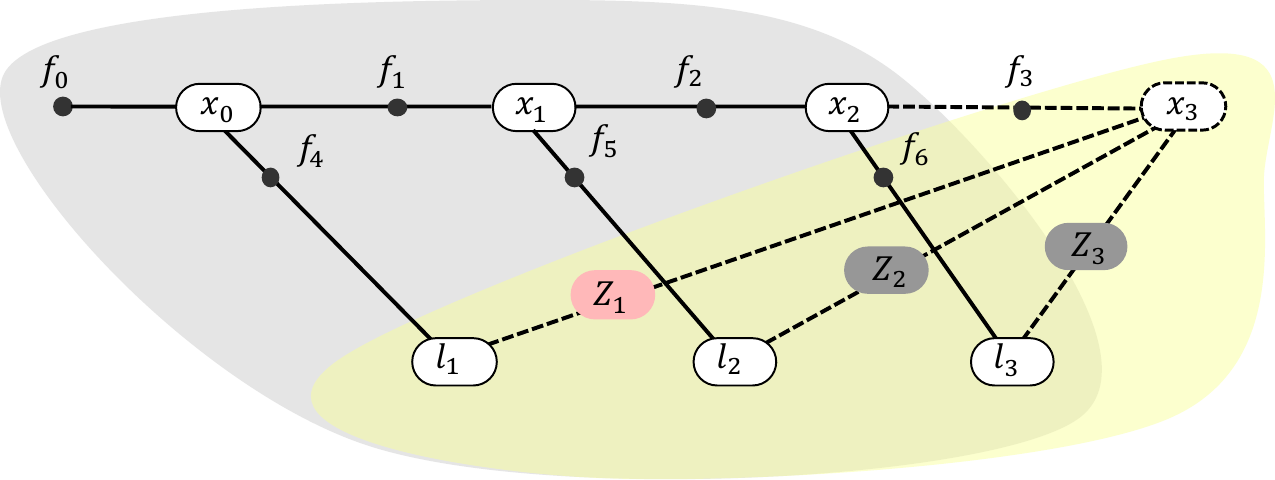}
	\scriptsize
	\caption{\scriptsize A prior factor graph is shown in the gray blob;  considering an action that leads to the factor $f_3$, we see the posterior factor graph in the yellow blob. The posterior graph includes the future +random measurements $Z_1,Z_2,Z_3$. Different sets of random measurements are assigned different colors which represents one possible partitioning. These sets are used to bound the conditional Entropy of the entire posterior graph.}
	\normalsize
	\label{fig: factor graph partitioning}
	\vspace{-10pt}
\end{figure}

In this work, we present a novel approach of simplification of the POMDP planning problem, specifically to the \emph{multivariate observation space}. When the observation space is high-dimensional, the calculation of expected information-theoretic rewards becomes expensive. We show that by partitioning the random variables that model future possible measurements into sets, this calculation becomes more efficient by developing novel bounds on the expected reward.

To illustrate, consider a factor graph representation for a given belief at planning time $t=2$, as in Fig~\ref{fig: factor graph partitioning}. In this toy example, we sketch out how in planning, a specific action leads to 3 new random observations involving different landmarks. One can think of a simplified setting, that takes into account only a subset of these random observations. This subset of observations is a partition of the multivariate random variable representing the 3 original observations. This multivariate random variable defines in this case the observation space for this specific time step, such that, its partitioning is a partition of the observation space. Similarly, in visual based POMDP setting, a partition of the observation space may correspond to a partition of the random variables representing future image pixels into subsets.

Specifically, our main contributions are as follows.  
We introduce the novel concept of observation space partitioning. Using the partitioned space, we show the relation to the original problem by deriving analytical bounds on the expected entropy that hold for all families of belief distributions. We present a partition tree that allows greater efficiency as we go down its hierarchy. We show that these bounds are adaptive, computationally cheaper, and that they converge to the original solution. Moreover, we show one possible realization of this general framework for an active SLAM scenario involving multivariate Gaussian distributions and present a hierarchy of efficient implementations. The speed up ranges from a factor of 4 for the least efficient one, and a speed-up from a cube to linear time for the most efficient one. We demonstrate the use of this framework in both simulated and real experiments with significant speed-up.

\section{RELATED WORK} \label{RelatedWork}
POMDP has been widely used as a model for decision-making under uncertainty, despite the fact  that obtaining the optimal solution for the planning problem is known to be intractable \cite{Papadimitriou87math} \cite{Kaelbling98ai}.
While the standard POMDP is formulated with state dependent reward functions, it is possible to extend this framework to include belief dependent reward as well; examples for such frameworks are $\rho$-POMDP proposed by \cite{Araya10nips} and BSP by \cite{Platt10rss} and \cite{VanDenBerg12ijrr}. This extension is essential for tasks such as terrain monitoring \cite{Popovic17icra}, information gathering \cite{Stachniss05rss}, and active SLAM \cite{Kim14ijrr}.
 Various approximation methods to solving POMDPs were proposed, however even approximating a solution is challenging since most real-world problems incorporate continuous spaces. A sparse tree representation combined with a Monte Carlo Tree Search was explored in \cite{Somani13nips}, in order to approximate near-optimal policies, however, it is not suitable for information-theoretic rewards.  A particle-based approach to represent the belief was taken by \cite{Sunberg18icaps}, proposing two different algorithms, one of them catering specifically to belief-dependent rewards. An abstraction of the observation model was studied in \cite{Barenboim22ijcai}, for a low-dimensional state-space and non-parametric beliefs, allowing speed-up compared with other approximation methods. Similarly to our method, it is using the observations as a means for simplification, where the main difference is that the mechanism for simplification is clustering of observation samples (post-hoc), while our method is aimed at the observation space directly, by partitioning the high-dimensional observation space and reducing its dimensionality.  
 
Early works have identified the importance of quantifying the information contained in observations. The quantitative value of possible measurements is presented in \cite{Davison05iccv}, in an effort to incorporate this value measure in inference. While it did allow for better selection of measurements using heuristics, it did not provide any optimality guarantees. Using the sub-modularity of mutual information, \cite{Krause08jmlr} formulated a near-optimal algorithm for sensor placement, which can alternatively cast as a myopic planning algorithm, however it was limited to Gaussian processes.

Probabilistic graphical models have seen substantial traction in the world of inference, one of the most popular one was introduced by \cite{Kaess08tro} and was later extended at \cite{Kaess12ijrr}. The latter utilized the structure of SLAM problems to be encoded in a Bayes-tree, which allowed for an incremental update of the posterior belief with incoming information.  
The computational complexity for big SLAM problems is still very high, and there have been many works that have tried to reduce the computational complexity of inference, considering probabilistic graphical models. Some of the more popular SLAM methods using graphical models were introduced by   \cite{Khosoussi20SPAR} presented a graph-theoretic approach to the problem of designing sparse reliable pose-graph SLAM in the context of measurement selection, both \cite{CarlevarisBianco14tro} and\cite{Kretzschmar12ijrr} showed methods for compressing a factor graph, and \cite{Zhang15cvpr} reviewed how feature selection based on some defined scores can improve localization and data association, proposing a greedy algorithm that relies sub-modularity as well.

Other works have studied simplification method for planning problems; \cite{Smith04uai} presented a heuristics-based bounds on the value function, to guide local updates; a belief compression method was proposed in \cite{Roy05jair}, but it lacked guarantees on planning performance. Several works have put forward simplified methods while providing guarantees; for a Gaussian high-dimensional state, \cite{Indelman16ral} proposed a transformation of the original information space to a conservative one, by decoupling all state variables. More general approaches were studied in \cite{Elimelech22ijrr} and \cite{Zhitnikov22ai}; the former outlined a theoretical framework for simplification in general while demonstrating said framework for a sparse approximation of the initial belief, while the latter studied a simplification in risk averse planning, while considering a distributional perspective. 
A multi-level adaptive simplification framework considering belief-dependent rewards was developed in \cite{Sztyglic22iros, Zhitnikov24ijrr_accepted}, based on novel bounds \cite{Sztyglic22iros} on a differential  entropy estimator \cite{Boers10fusion} that utilize a reduced number of state-samples. However, similarly to many non-parametric methods, these works  aimed at a low-dimensional state-space setting. 
These works and other simplification methods are complementary to ours and can be used alongside one another, as none of these works considered simplification to the observation space itself.

\section{NOTATIONS AND PRELIMINARIES} \label{notations section}
\subsection{$\rho$-POMDP}
A discrete-time POMDP models an agent decision process by outlining the dynamics of the interaction between the agent and its environment. It is defined as the tuple $(\mathcal{X},\mathcal{A},\mathcal{Z},T,O,\rho)$, consisting of a state, action and observation spaces, a transition and observation models, and a reward function. We assume a Markovian transition model, i.e. $T(X,a,X')=\prob{X'\vert X,a}$, and that each measurement is conditionally independent given the state, i.e. $O(X,z)=\prob{z\vert X}$. 

Since the agent only observes the environment through noisy measurements, it must maintain a probability distribution over the true state; we denote this distribution as a \emph{belief}. The belief and the propagated belief, are defined as, respectively:
 \begin{align}
 	&b_k \triangleq  b[X_k] = \mathbb P (X_k\vert z_{0:k},a_{0:k-1})\triangleq\mathbb P (X_k\vert h_k) \\
 	&b^-_k \triangleq b[X_k^-] = \mathbb{P} (X_k\vert z_{0:k-1},a_{0:k-1})\triangleq\mathbb P (X_k\vert h^-_k).
 \end{align}
 At each discrete time step this belief is updated with new motion and observation information according to Bayes' rule. Given an action $a_k$ and observation $z_{k+1}$ the  belief is updated according to:
 \begin{equation*}
 	b_{k+1}=\eta \int_{X_k} \prob{X_{k+1}\vert X_k,a_k}\prob{z_{k+1}\vert X_{k+1}}b_k dX_{k},
 \end{equation*}   
 where $\eta$ is a normalization constant.
A policy $\pi: \mathcal{B} \mapsto \mathcal{A}$ maps belief states to actions. Usually, only state dependent rewards are considered in the POMDP setting. BSP and later on
$\rho$-POMDP, extend the POMDP model to include belief dependent rewards. For some finite planning horizon $\ell$, the \emph{value} of a policy $\pi$, is defined as the expected cumulative reward received by following $\pi$ with initial belief $b_k$: 
\begin{equation}
	V^\pi(b_k) = \rho(b_k,\pi_k(b_k))+ \!\!\! \mathop{\mathbb{E}}_{Z_{k+1: k+\ell}} \left [\sum_{i=k+1}^{k+\ell}\rho(b_i,\pi_i(b_i))\right].
\end{equation} 
Solving a POMDP is equivalent to finding the optimal policy $\pi ^*$ such that the value function is maximized.  

A general form of the reward function $\rho(b_i,\pi_i(b_i))$ can be expressed as a sum of a belief-dependent component $R(b_i)$, and a state-dependent component $R^X(b_i,\pi_i(b_i))$,
 \begin{equation} \label{eq: reward factorization}
 	\rho(b_i,\pi_k(b_i)) = R(b_i) + \alpha \cdot R^X(b_i,\pi(b_i)),
 \end{equation}
 where 
  $\alpha$ is some weight. The belief-dependent component could correspond to an information-theoretic reward, such as differential entropy and information gain. The state-dependent component can be expressed as $R^X(b_i,\pi(b_i))=\expt{X_i \sim b_i}{r(X_i,\pi_i(b_i))}$. For instance, $r(X_i,\pi_i(b_i))$ could represent distance to goal or obstacle, and  control effort.
  
In most cases, $R(b_i)$ is the computationally expensive component, and as such, it is the focal point of this work. 
We consider a particular instance of this  reward function, namely, differential Entropy: 
\begin{equation}
	R(b)\triangleq -\entrp{X }\equiv \mathop{\mathbb{E}}_{X \sim b}\left(\log b[X]\right),
\end{equation}
where $X$ is a random variable distributed according to $b[X]$.

If both $X,Z$, are treated as random variables, the expected reward becomes the conditional entropy of these random variables, i.e.
\begin{equation} \label{eq:ExpEntropyGeneral}
	\expt{Z}{R(b)}= - \entrp{X \mid Z} = -\expt{Z}{\entrp{X \mid Z=z}}. 
\end{equation}
Thus, the expected reward at each $i$th look ahead step
can be equivalently written as:
\begin{equation}\label{eq:ExpEntropy}
		\expt{Z_{k+1:i}}{R(b_{i},\pi(b_i))}\!=\!
		- \mathcal{H}(X_{i}\vert Z_{k+1:i}),
\end{equation}
where the future observations are drawn from the distribution $\prob{ Z_{k+1:i} \mid b_k,  \pi}$ and $i\in[k+1,k+\ell]$.

In this paper we consider an open loop setting, as formulated in the next section.
 \subsection{Active SLAM} \label{active slam}

 Let $x_k$ be the state of the agent at time $k$, and $X_k$ be the joint state of the agent's trajectory and environment, e.g.~landmarks, up to, and including time $k$. We define $z_k\triangleq\{z_k^0,...,z_k^m\}$ as the set of all measurements observed at time $k$, and $z_{0:k}\triangleq\{z_0,...,z_k\}$ as the set of all measurements until time $k$. Similarly we define $a_{0:k}\triangleq\{a_0,...,a_k\}$ as the set of all actions  until time $k$.
 Assuming static landmarks, the motion of the agent, and the observations it receives are modeled as: 
 \begin{equation} \label{models}
 	x_{k+1}=f(x_k,a_k)+w_k ,\quad\quad z_k=h(X_k)+v_k,
 \end{equation} 
where $f$ and $h$ are some deterministic functions, and $w_k$ and $v_k$ are their process noise, respectively. 

We denote the Data Association vector at time $k$ as $\beta_k$. The dimensionality of $\beta_k$, is equal to $X_k$ excluding $x_k$, and is composed of binary entries, where each entry indicates whether the corresponding state was involved in a measurement at that given time step. We assume that each measurement involves the current pose $x_i$, such that $\beta$ does not account for it.  For example, at time step $k=3$, for a prior state vector of dimensionality $5$ and a measurement involving the third and fifth components of the state (e.g.~observation of two landmarks): $\beta_3 = \begin{pmatrix} 0 & 0 & 1 & 0 & 1\end{pmatrix}^T$.

Alternatively, we can express the objective function using the data association vector. For a given action sequence $a_{k:k+\ell}$, the objective function is defined as:
\begin{equation} \label{objective conditioned beta}
	J(b_k,a_{k:k+\ell})\triangleq \mathop \mathbb{E}_{\tilde \beta}\left[\mathop\mathbb{E}_{\tilde Z \vert \tilde \beta}\left[\sum_{i=k+1}^{k+\ell} R(b_{i},a_i)\right]\right], 
\end{equation}
where $\tilde \beta \triangleq\beta_{k+1:k+\ell}$ and $\tilde Z \triangleq Z_{k+1:k+\ell}$. In this scenario, $\tilde \beta$ dictates the number of measurements and the states involved, while $\tilde Z$ encodes the information about the distribution of those measurements.
Combining Bayes rule and the properties of the models, we can factorize a posterior belief $b[X_{k+\ell}] $ given $\tilde \beta$, into prior belief, motion and measurement factors:
\begin{equation} \label{eq: BeliefModelFactorization}
	b[X_{k+\ell}]  \propto b[X_{k}] \! \mathop  \prod_{i=k+1}^{k+\ell} \! \prob{x_i\vert x_{i-1},a_{i-1}} \prob{z_i \mid X_{i}, \beta_i},
\end{equation} 
where
\begin{equation}
	\prob{z_i \mid X_{i}, \beta_i} = \prod_{j=1}^{m_i(\beta_i)} \prob{z_{i,j}\vert x_i,  X_i^{\beta_i(j)}}, 
\end{equation}
where $m_i$ is the number of measurements at the $i$th time step, and $X_i^{\beta_i(j)}$ represents the involved state in the $j$th  measurement, at the $i$th time step, both as a function of $\beta_i$.
We denote the set of all involved state variables for a given time step as:
\begin{equation} \label{eq: InvolvedState}
	X_k^\text{inv}=\{X_k^{\beta_k(j)} | j\in \mathcal J \}, 
\end{equation}
where $ \mathcal J = \{1,2,..,m_k(\beta_k)\}$. 

When the models in \eqref{models} are linear, with zero-mean Gaussian noise, i.e. $w_k\sim \mathcal{N}(0, W_k)$ and $v_k\sim \mathcal{N}(0, V_k)$,  and the prior belief is Gaussian, it can be shown that the posterior belief is also Gaussian. In such case, the Entropy of the posterior belief can be expressed as:
\begin{equation} \label{GaussianEntropy}
	\entrp{X}=\frac{1}{2}(\ln\vert\Sigma \vert+N\ln(2\pi e)),
\end{equation}
where $b[X] = \mathcal{N}(\mu, \Sigma)$, with mean $\mu \in \mathbb{R}^N$ and covariance matrix $\Sigma \in \mathbb{R}^{N \times N}$. The above also holds when the belief is modeled or approximated as Gaussian for nonlinear models. 
The inverse of the covariance is known as the information matrix, such that $\Sigma^{-1}_k=\Lambda_k$. At each given time step, the information matrix of a posterior belief $b_{k+1}$, can be decomposed to: 
\begin{equation}
	\Lambda_{k+1} = \Lambda_k^{\text{Aug}} + F^T W_{k+1}^{-1} F + H^T V_{k+1}^{-1} H,
\end{equation}
where  $\Lambda_k^{\text{Aug}}$ is the prior information matrix of $b_k$, augmented with zeros to accommodate new states, $W_{k+1}$ and $V_{k+1}$ are the noise covariance matrices of the motion and measurement models respectively, and $F\triangleq \nabla f$ and $H\triangleq \nabla h$ are the Jacobians of the motion and measurement functions respectively. The stacked matrices of $F^T W_{k+1}^{-1} F$ and $H^T V_{k+1}^{-1} H$ are denoted as the collective Jacobian $\tilde A_k$, see \cite{Indelman15ijrr} for details. If we combine the collective Jacobians of consecutive time steps, i.e. $\tilde{A}_{k+1},\tilde{A}_{k+2},\hdots,\tilde{A}_{i}$ we get the following update rule: 
\begin{equation} \label{InfoMatUpdate}
	\Lambda_{i} = \Lambda_k^{\text{Aug}}+\tilde A^T_{k+1:i} \cdot \tilde A_{k+1:i},
\end{equation}
where $\tilde A_{k+1:i}$ is the collective Jacobian of the motion and measurement factors of \eqref{eq: BeliefModelFactorization}, from the time step $k+1$ until $i$.   

For the sake of readability, we drop the notation of the history $h_i$ from now on, but assume all distributions of a given time step are conditioned on the history available at the beginning of the planning session. We denote the collective Jacobian of a given horizon $\tilde A_{k+1:i}$ simply as $A_{i}$.

\section{PROBLEM FORMULATION AND APPROACH}

In the following section we show how to use observation space partitioning to simplify the BSP problem. In order to choose the optimal action from a pool of candidate actions, one needs to evaluate the expected reward function $\mathbb{E}_{Z_{k:i}} \left (\rho(b_i, a_i)  \right)$ at each future time instant $i$ in the planning horizon $\ell$ for each action sequence $a_{k:k+\ell}$. Instead, one can evaluate bounds on the expected reward function as a proxy,  
	   \begin{align}\label{eq:bounds_rhoz}
		\mathcal{LB}^{\rho}_{i} \leq \mathop \mathbb{E}_{Z_{k:i}} \left (\rho(b_i, a_i)  \right) \leq \mathcal{UB}^{\rho}_{i}.
	\end{align} 
In this work we focus on bounding the information-theoretic reward $R(b_{i})$, since it is typically the computational bottleneck with respect to the state-dependent reward,
\begin{equation} \label{eq: GeneralBounds}
	\mathcal{LB}_{i}\leq \mathop \mathbb{E}_{Z_{k:i}} \left (R(b_{i}) \right)\leq \mathcal{UB}_{i},
\end{equation}
and thus
\begin{align}\label{eq:bounds_ho2}
	\mathcal{UB}^{\rho}_{i} &\triangleq 
	\mathcal{UB}_i + \alpha \mathbb{E}_{Z_{k:i}}  [R^X(b_i,a_i)] ,\\
	\mathcal{LB}^{\rho}_{i} &\triangleq 
	\mathcal{LB}_i + \alpha \mathbb{E}_{Z_{k:i}} [R^X(b_i,a_i)].
	\label{eq:bounds_ho2b}
\end{align}
Note that, considering an open-loop setting, the expected state-dependent  reward $\mathbb{E}_{Z_{k:i}} [R^X(b_i,a_i)]$ component in $\rho(b_i,a_i) $ does not depend on future observations, which marginalize out:   since ~$R^X(b_i,a_i)= \expt{X_i \sim b_i}{r(X_i,a_i)}$, $\mathbb{E}_{Z_{k:i}} [R^X(b_i,a_i)]$ can be expressed as
\begin{align*}
	&\int_{Z_{k:i}} \prob{Z_{k:i} \mid b_k, a_{k:i}} R^X(b_i,a_i) dZ_{k:i}
	\\	
	&=\!\! \int_{X_i} \prob{X_i \mid b_k, a_{k:i}} r(X_i,a_i) dX_i
	\\
	&=\mathbb{E}_{X_k \sim b_k} \mathbb{E}_{X_{k+1}\sim \prob{.\mid X_k,a_k}} \cdots \mathbb{E}_{X_{i}\sim \prob{.\mid X_{i-1},a_{i-1}}} [r(X_i,a_i)].
\end{align*}
Therefore, the bounds from \eqref{eq:bounds_ho2} assume the form
\begin{align}
	\mathcal{UB}^{\rho}_{i} &\triangleq 
	\mathcal{UB}_i + \alpha \mathbb{E}_{X_{i}\sim \prob{.\mid b_k, a_{k:i-1}}}  [r(X_i,a_i)] ,\\
	\mathcal{LB}^{\rho}_{i} &\triangleq 
	\mathcal{LB}_i + \alpha \mathbb{E}_{X_{i}\sim \prob{.\mid b_k, a_{k:i-1}}}  [r(X_i,a_i)],
\end{align}
which means that the considered simplification of the observation space does not have an impact on the state-dependent reward component in an open-loop setting, which in any case does not depend on future observations. 

As stated, this result is only valid for the open-loop setting considered herein, while in a close-loop setting we would remain with the bounds \eqref{eq:bounds_ho2}-\eqref{eq:bounds_ho2b} (with appropriate replacement of actions to policies; see also \cite{Barenboim22ijcai}).

In the same manner we can bound the objective function by summing up the bounds over the expected reward function for each of the time steps, 
\begin{equation}
	\sum_{i=k+1}^{k+\ell} \mathcal{LB}_{i}^{\rho}\leq J\left( b_k,a_{k:k+\ell-1}\right)\leq \sum_{i=k+1}^{k+\ell}\mathcal{UB}_{i}^{\rho}.
\end{equation}

\setlength{\belowcaptionskip}{-12pt}
\begin{figure} [!h]
	\hspace*{+25pt}
	\includegraphics[scale=0.6]{./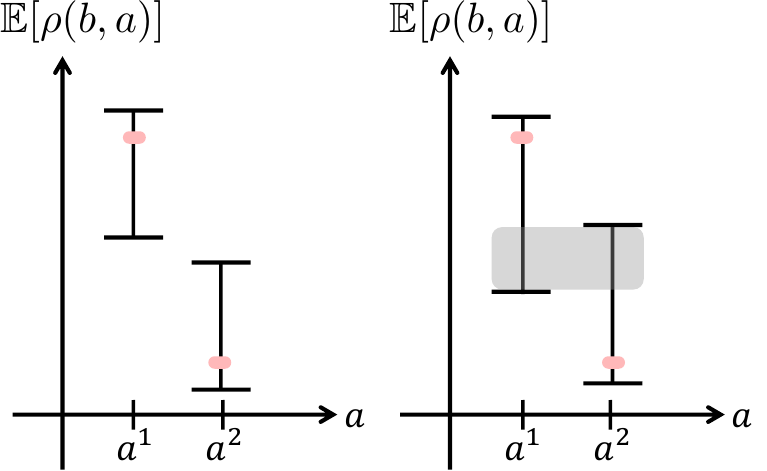}
	\scriptsize
	\caption{\scriptsize The expected reward is shown in red within its bounds. On the left we can select the optimal action based on the bounds alone, on the right the worst-case loss is shaded in gray.} 
	\normalsize
	\label{fig:BoundsConcept}
	\vspace{6pt}
\end{figure}
\setlength{\belowcaptionskip}{0pt} 

Under the assumption that these bounds can be efficiently calculated, it is easier to select actions based on the reward bounds; see illustration in Fig.~\ref{fig:BoundsConcept}. 
We can think about two distinct actions, $a^1$ and $a^2$. For each action we calculate the expected reward bounds and face two cases: in the first, the bounds do no overlap and we can select the optimal action, in the second, the bounds do overlap and we choose between tightening the bounds such that they do not overlap, or selecting an action while bounding the loss. Previous works have developed such bounds considering various simplification methods, as discussed in Section  \ref{RelatedWork}. 

In this work, we put forward a fundamental simplification that applies to the observation space itself. Specifically, we propose a partitioning of the observation space and develop expected reward bounds that are a function of this partitioning.

\subsection{Partitioning of a Multivariate Observation Space }\label{sec:Partitioning}
Consider a multivariate random variable $Z$ that represents future observations and the corresponding observation space $\mathcal{Z}$. In general, $Z$ can be represented by 
\begin{equation} \label{MasurementVector}
	Z =(Z^1, Z^2,\hdots, Z^m),
\end{equation}
where $Z^i$ is a random variable defined by a given measurement model, and $m$ is the number of such random variables.
We can now partition $Z$ into different subset of components, for example, consider the partitioning  $Z^s \in \mathcal{Z}^s$ and $Z^{\bar s} \in \mathcal{Z}^{\bar s}$,  such that: 
\begin{equation} \label{MasurementPartitioning}
\begin{split}
	&Z^s =\{Z^1, Z^2,\hdots, Z^n\}, \\
	&Z^{\bar{s}} =\{Z^{n+1}, Z^{n+2},\hdots, Z^m\},
\end{split}
\end{equation}
where $Z=Z^s\cup Z^{\bar s}$, and their corresponding subspaces $\mathcal{Z}=\mathcal{Z}^s + \mathcal{Z}^{\bar s}$. 

In the subsequent section, we derive bounds on the expected reward that are a function of this partitioning, such that $\mathcal{LB}$ and $\mathcal{UB}$ from \eqref{eq: GeneralBounds} become:
\begin{align}
	&\mathcal{LB}_{i}(b_{i},{Z^s_{k:i}},Z^{\bar{s}}_{k:i}),\\ 
	&\mathcal{UB}_{i}(b_{i},{Z^s_{k:i}}),
\end{align} 
where ${Z^s_{k:i}},Z^{\bar{s}}_{k:i}$ are subsets of measurements from time step $k$ to $i$.
The partitioning can be applied (but not limited) to two different observation spaces \textemdash  a raw measurement such as pixels in an image, or to a SLAM scenario where the measurement model is defined by \eqref{models}, and $\beta$ dictates the dimension of the measurements, e.g.~number of observed landmarks considering a future camera pose. 

To illustrate the computational advantage of these bounds, we apply partitioning to a raw image measurement of size $20 \times 20$ binary pixels. Each pixel is represented by a random variable $Z^{x,y}\in \{0,1\}$, where $x,y$ denote the pixel location on the sensor, and $Z\in \mathcal{Z}\subseteq (\mathbb{F}_2)^{400}$. We must consider all of the different permutations for each of those pixels, $2^{400}$ in total, which defines $\vert \mathcal{Z} \vert$ in this case. For example, if we partition $Z^s$ to represent the left half of the image, and $Z^{\bar{s}}$ to represent the right half, we need only to consider $2^{200}$ permutations for $Z^s$, and another $2^{200}$ for $Z^{\bar{s}}$, $2^{201}$ in total.  

\paragraph*{Hierarchical Partitioning}
there can be higher levels of partitioning, breaking down further a given measurement set into two sets. To encode this partitioning scheme we index the partitioning depth, and the number of nodes at a given depth. Each partitioned set is given a unique encoding denoted as $\prtn{n}{i}{m}{j}$, where $n$ is the node number at the $i$th partitioning level, and $m$ is the node number at the parent partitioning level $j$. We note this slight abuse of notation in regard to \eqref{MasurementPartitioning}, but it should be clear from the context, which notation is used. If two sets share a parent set, we consider them a base subset and its complement. For example, $\prtn{4}{3}{1}{2}$ represents the 4th node of level 3, where the parent set is the first node of level 2. In this new notation, $Z^s,Z^{\bar s} $ becomes $\prtn{1}{1}{1}{0},\prtn{2}{1}{1}{0}$ or equivalently $\prtn{1}{1}{1}{0},\prtn{\bar 1}{1}{1}{0}$. Overall, for $Z \in \mathbb R^m$, it is possible to create a partition hierarchy of depth $\log_2m$, as illustrated in Fig.~\ref{fig: PartitionTree}. 

\begin{figure} [!h] 
	\includegraphics[width=\columnwidth]{./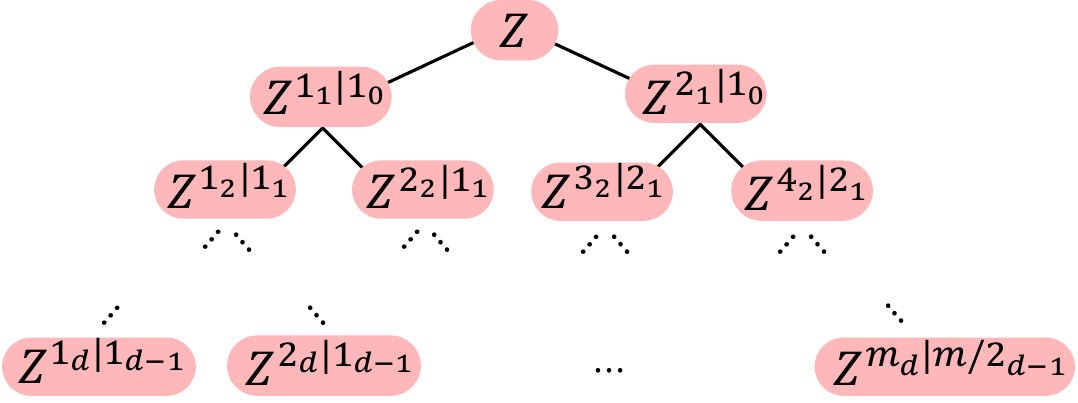}
	\scriptsize
	\caption{\scriptsize An Illustration of a possible partition tree. At each level of partitioning, we split a measurement set into two. For $Z\in \mathbb R^m$, the depth of the tree is $d=\log_2m$.}
	\normalsize
	\label{fig: PartitionTree}
\end{figure}

\subsection{Bounds on Expected Entropy}

In this section we use Measurement Partitioning to derive novel information theoretic reward bounds considering arbitrary distributions, we begin by introducing a helpful lemma.

\begin{lemma} \label{EntropyBayes}
	The conditional Entropy can be factorized as: 
\end{lemma}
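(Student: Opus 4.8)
The plan is to obtain the factorization directly from the definition of conditional differential entropy combined with Bayes' rule, exploiting the partition $Z = Z^s \cup Z^{\bar s}$ of \eqref{MasurementPartitioning}. By definition $\entrp{X \mid Z} = -\expt{X,Z}{\log \prob{X \mid Z}}$, where the expectation is taken over the joint density $\prob{X, Z^s, Z^{\bar s}}$ induced by the prior belief and the measurement models.

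First I would rewrite the posterior $\prob{X \mid Z^s, Z^{\bar s}}$ by keeping $Z^s$ inside the conditioning context and applying Bayes' rule only to the pair $(X, Z^{\bar s})$, i.e.
\begin{equation*}
	\prob{X \mid Z^s, Z^{\bar s}} = \frac{\prob{Z^{\bar s} \mid X, Z^s}\,\prob{X \mid Z^s}}{\prob{Z^{\bar s} \mid Z^s}} .
\end{equation*}
Taking the logarithm and using linearity of expectation splits $\entrp{X \mid Z}$ into three terms, each of which I would then identify as a (conditional) entropy by marginalizing out the variables on which its integrand does not depend: since $\log \prob{X \mid Z^s}$ is constant in $Z^{\bar s}$ and $\int \prob{Z^{\bar s} \mid X, Z^s}\, dZ^{\bar s} = 1$, the first term collapses to $\entrp{X \mid Z^s}$; the remaining two give $\entrp{Z^{\bar s} \mid X, Z^s}$ and $-\entrp{Z^{\bar s} \mid Z^s}$. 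Collecting them yields
\begin{equation*}
	\entrp{X \mid Z} = \entrp{X \mid Z^s} + \entrp{Z^{\bar s} \mid X, Z^s} - \entrp{Z^{\bar s} \mid Z^s},
\end{equation*}
i.e. the chain rule of entropy organized around the partition; equivalently $\entrp{X \mid Z} = \entrp{X \mid Z^s} - I(X; Z^{\bar s} \mid Z^s)$, which isolates the nonnegative correction term that the subsequent bounds will control.

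The argument carries no delicate estimates — it is essentially one application of Bayes' rule inside an expectation — so the only point that needs care is the measure-theoretic bookkeeping: justifying each marginalization (Fubini, together with the fact that a conditional density integrates to one) and checking that the conditional densities $\prob{X \mid Z^s}$, $\prob{Z^{\bar s} \mid Z^s}$ and $\prob{Z^{\bar s} \mid X, Z^s}$ are well defined on the relevant supports. I expect this bookkeeping, and not any conceptual difficulty, to be the main obstacle, and it is dispatched by the standard positivity/regularity assumptions on the belief and the motion and measurement models already in force.
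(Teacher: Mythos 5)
The lemma you are asked to prove is the plain two-variable identity \eqref{reward function factorization}, namely $\entrp{X\mid Z}=\entrp{Z\mid X}+\entrp{X}-\entrp{Z}$; no partition of $Z$ appears in it (the partition only enters later, in Lemma~\ref{lem:Factorization} and the bounds). Your derivation keeps $Z^{s}$ inside the conditioning throughout and terminates with a different identity, $\entrp{X\mid Z}=\entrp{X\mid Z^{s}}+\entrp{Z^{\bar s}\mid X,Z^{s}}-\entrp{Z^{\bar s}\mid Z^{s}}$. That identity is correct (it is the chain rule organized around the partition, and is indeed relevant to the later upper bound via $I(X;Z^{\bar s}\mid Z^{s})\ge 0$), but it is not the claimed factorization, so as written the proof does not establish the lemma.

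The fix is immediate and uses exactly your technique: apply Bayes' rule to the pair $(X,Z)$ itself, $\prob{X\mid Z}=\prob{Z\mid X}\prob{X}/\prob{Z}$, take $-\log$ and the expectation over the joint, and the three resulting terms are $\entrp{Z\mid X}$, $\entrp{X}$ and $-\entrp{Z}$; equivalently, specialize your formula to $Z^{s}=\emptyset$, $Z^{\bar s}=Z$, using $\entrp{X\mid\emptyset}=\entrp{X}$. Once corrected, your route (Bayes' rule inside the expectation) is essentially the same one-line argument as the paper's, which instead combines the two chain-rule decompositions $\entrp{X,Z}=\entrp{X\mid Z}+\entrp{Z}=\entrp{Z\mid X}+\entrp{X}$; neither version needs the conditional independence of the measurements given the state, nor any of the measure-theoretic bookkeeping beyond what you already note.
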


\begin{equation} \label{reward function factorization}
		\mathcal{H} \left(X | Z \right) = 
		\mathcal{H} \left(Z |X\right) +  \mathcal{H} \left(X\right)-\mathcal{H}\left(Z \right).
\end{equation}  

\begin{proof}
	For two random variables that have a joint Entropy $\mathcal{H}(X,Z)$, we know that conditioning on $Z$ yields $\mathcal{H}(X,Z)=\mathcal{H}(X\vert	Z)+\mathcal{H}(Z)$. Similarly, conditioning on $X$ yields $\mathcal{H}(X,Z)=\mathcal{H}(Z\vert	X)+\mathcal{H}(X)$. Combining both equations and rearranging terms we obtain the desired equality.
\end{proof}

There are three quantities we need to evaluate in \eqref{reward function factorization}; the Entropy of the likelihood of a measurement, which is done via the measurement model, the prior Entropy, and the Entropy of the measurement given the history. The prior Entropy is common to all actions and can be calculated once.  The rest of the terms involve future random measurements and we shall apply partitioning to these terms.

\begin{lemma}\label{lem:Factorization}
	Given two sets of expected measurements and the partitioning $Z=Z^s\cup Z^{\bar s}$ from \eqref{MasurementPartitioning}, the conditional Entropy can be factorized as:
\begin{equation} \label{reward factorization with measurements}
			\entrp{X | Z} \!\! =  \!\!\entrp{Z^{s}\vert X}+\entrp{Z^{\bar{s}}\vert X}  - \entrp{Z^{s},Z^{\bar{s}}}  +\entrp{X}.
\end{equation}
\end{lemma}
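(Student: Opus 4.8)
The plan is to start from Lemma~\ref{EntropyBayes}, which gives $\entrp{X\mid Z} = \entrp{Z\mid X} + \entrp{X} - \entrp{Z}$, and then rewrite the two measurement-related terms $\entrp{Z\mid X}$ and $\entrp{Z}$ using the partitioning $Z = Z^s \cup Z^{\bar s}$. The prior term $\entrp{X}$ is left untouched. Since $\entrp{Z} = \entrp{Z^s, Z^{\bar s}}$ by definition of $Z$ as the concatenation of its two parts, that term already appears in the claimed identity as is; so the only real work is to show $\entrp{Z\mid X} = \entrp{Z^s\mid X} + \entrp{Z^{\bar s}\mid X}$.

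First I would note that $\entrp{Z\mid X} = \entrp{Z^s, Z^{\bar s}\mid X}$, and apply the chain rule for conditional entropy: $\entrp{Z^s, Z^{\bar s}\mid X} = \entrp{Z^s\mid X} + \entrp{Z^{\bar s}\mid Z^s, X}$. The key step is then to argue that $\entrp{Z^{\bar s}\mid Z^s, X} = \entrp{Z^{\bar s}\mid X}$, i.e. that $Z^{\bar s}$ is conditionally independent of $Z^s$ given the state $X$. This is exactly the standard POMDP/SLAM assumption recorded in Section~\ref{notations section}: each measurement is conditionally independent given the state ($O(X,z) = \prob{z\mid X}$), and more concretely the factorization in \eqref{eq: BeliefModelFactorization}--\eqref{eq: InvolvedState} shows the measurement likelihood factorizes across the individual $Z^j$ given $X$ (and the data association), so any disjoint grouping $Z^s$, $Z^{\bar s}$ of these components is conditionally independent given $X$. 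Hence $\prob{Z^s, Z^{\bar s}\mid X} = \prob{Z^s\mid X}\prob{Z^{\bar s}\mid X}$, which immediately yields the additivity of the conditional entropy.

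Combining the two observations: starting from Lemma~\ref{EntropyBayes}, substitute $\entrp{Z\mid X} = \entrp{Z^s\mid X} + \entrp{Z^{\bar s}\mid X}$ and $\entrp{Z} = \entrp{Z^s, Z^{\bar s}}$, and rearrange to get
\begin{equation*}
	\entrp{X\mid Z} = \entrp{Z^s\mid X} + \entrp{Z^{\bar s}\mid X} - \entrp{Z^s, Z^{\bar s}} + \entrp{X},
\end{equation*}
which is \eqref{reward factorization with measurements}. No integrability or Gaussianity assumptions are needed; the identity holds for arbitrary belief distributions, as claimed.

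The main obstacle — really the only non-bookkeeping point — is justifying the conditional independence $Z^s \perp Z^{\bar s}\mid X$ cleanly. One has to be a little careful about what "$X$" denotes here: in the SLAM setting $X$ is the joint trajectory-and-map state $X_i$, and the relevant likelihood $\prob{Z_i\mid X_i,\beta_i}$ factorizes over individual observations per \eqref{eq: BeliefModelFactorization}, so conditioning on the full $X$ (and implicitly on the data association $\beta$) decouples any partition of the observation components. I would state this as the one assumption being invoked and cite the conditional-independence-of-measurements-given-state property from the $\rho$-POMDP setup; the rest is just the entropy chain rule.
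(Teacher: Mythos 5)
Your proof is correct and follows essentially the same route as the paper: starting from Lemma~\ref{EntropyBayes} and using the conditional independence of measurements given the state to split $\entrp{Z^{s},Z^{\bar{s}}\vert X}$ into $\entrp{Z^{s}\vert X}+\entrp{Z^{\bar{s}}\vert X}$. Your chain-rule phrasing of that step is just a slightly more explicit version of the paper's direct factorization of $\prob{Z^{s},Z^{\bar{s}}\vert X}$, so there is no substantive difference.
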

\begin{proof}
	Using \eqref{reward function factorization}, we can rewrite the conditional Entropy as $\mathcal{H} \left(X | Z \right) =  \entrp{Z^{s},Z^{\bar{s}}\vert X}
		-\entrp{Z^{s},Z^{\bar{s}}}+\entrp{X}$. The measurements are independent given the state, such that $\mathbb P \left(Z^{s},Z^{\bar{s}}\vert X\right)=\mathbb P \left(Z^{s}\vert X\right)\mathbb P \left(Z^{\bar{s}}\vert X\right)$. The Entropy of two independent random variables is just the sum of individual Entropies such that $\mathcal{H} \left(Z^{s},Z^{\bar{s}}\vert X\right)=\mathcal{H} \left(Z^{s}\vert X\right)+\mathcal{H} \left(Z^{\bar{s}}\vert X\right)$.
\end{proof}
Having established what measurement partitioning looks like in \eqref{MasurementPartitioning}, we can use it to bound the expected reward. The following two Theorems present our main result.
\begin{theorem} \label{th: UB}
	The conditional Entropy can be bounded from above by:
	\begin{equation} \label{eq: RewardUB}
	\entrp{X \vert Z} \leq \mathcal{UB}\triangleq \mathcal{H}\left(Z^{s}\vert X\right)+\mathcal{H}\left(X\right)-\mathcal{H}\left(Z^{s}\right).
	\end{equation}
\end{theorem}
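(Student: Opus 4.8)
The plan is to start from the factorization in Lemma~\ref{lem:Factorization}, namely
\[
\entrp{X\vert Z} = \entrp{Z^{s}\vert X}+\entrp{Z^{\bar s}\vert X}-\entrp{Z^{s},Z^{\bar s}}+\entrp{X},
\]
and isolate exactly the terms that do not appear in the claimed upper bound. Subtracting the proposed $\mathcal{UB}$ from $\entrp{X\vert Z}$ leaves
\[
\entrp{X\vert Z}-\mathcal{UB} = \entrp{Z^{\bar s}\vert X}-\entrp{Z^{s},Z^{\bar s}}+\entrp{Z^{s}},
\]
so the theorem is equivalent to the inequality $\entrp{Z^{s},Z^{\bar s}}-\entrp{Z^{s}}\ge \entrp{Z^{\bar s}\vert X}$. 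First I would recognise the left-hand side as the chain rule for entropy, $\entrp{Z^{s},Z^{\bar s}}-\entrp{Z^{s}}=\entrp{Z^{\bar s}\vert Z^{s}}$, reducing the goal to $\entrp{Z^{\bar s}\vert Z^{s}}\ge \entrp{Z^{\bar s}\vert X}$.

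The core step is then the ``conditioning reduces entropy'' principle together with the conditional-independence structure of the measurement model. Because the measurements are independent given the state, $Z^{\bar s}$ and $Z^{s}$ are conditionally independent given $X$, so $\entrp{Z^{\bar s}\vert X} = \entrp{Z^{\bar s}\vert X, Z^{s}}$. Since additional conditioning never increases entropy, $\entrp{Z^{\bar s}\vert X,Z^{s}}\le \entrp{Z^{\bar s}\vert Z^{s}}$, which is precisely the inequality we need. Chaining these gives $\entrp{Z^{\bar s}\vert X}\le\entrp{Z^{\bar s}\vert Z^{s}}$, hence $\entrp{X\vert Z}-\mathcal{UB}\le 0$. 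An alternative phrasing I would keep in mind, in case a more ``information-theoretic'' exposition is preferred, is to write the slack as a difference of mutual informations, $\mathcal{UB}-\entrp{X\vert Z} = I(Z^{\bar s};Z^{s})$ versus... more cleanly, the gap equals $\entrp{Z^{\bar s}\vert Z^{s}}-\entrp{Z^{\bar s}\vert X}\ge 0$, and note it vanishes exactly when conditioning on $Z^{s}$ is as informative about $Z^{\bar s}$ as conditioning on $X$ — e.g.\ when $Z^{\bar s}$ is empty, recovering tightness.

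I do not anticipate a serious obstacle here; the result is a short consequence of the chain rule, the conditional independence of measurements given the state (already invoked in the proof of Lemma~\ref{lem:Factorization}), and monotonicity of entropy under conditioning. The one point requiring a little care is that these manipulations are being applied to the \emph{expected} entropy / conditional-entropy quantities $\entrp{\cdot\vert\cdot}$ defined via expectation over the conditioning variable in \eqref{eq:ExpEntropyGeneral}, so I would make sure that the standard identities (chain rule, $\entrp{A\vert B,C}\le\entrp{A\vert B}$) are stated in that conditional-entropy form rather than for the pointwise differential entropy; for continuous $Z$ these hold for differential entropy as well as long as the relevant densities exist, which is the standing assumption. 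If one wished to be fully rigorous about the continuous case, the cleanest route is to express everything through mutual information, $I(Z^{\bar s};X\mid Z^{s})\ge 0$, which is nonnegative without any sign subtleties, and to translate back; but for the level of this paper the chain-rule-plus-conditioning argument above suffices.
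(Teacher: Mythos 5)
Your proof is correct, but it follows a different route than the paper. The paper's argument is more direct: it first notes that $\entrp{X \vert Z}\leq \entrp{X\vert Z^{s}}$ (conditioning on the extra measurements $Z\setminus Z^{s}$ can only reduce entropy, since the gap is a nonnegative conditional mutual information), and then simply rewrites $\entrp{X\vert Z^{s}}$ via Lemma~\ref{EntropyBayes} as $\entrp{Z^{s}\vert X}+\entrp{X}-\entrp{Z^{s}}$. You instead start from the full factorization of Lemma~\ref{lem:Factorization}, compute the residual $\entrp{X\vert Z}-\mathcal{UB}=\entrp{Z^{\bar s}\vert X}-\entrp{Z^{s},Z^{\bar s}}+\entrp{Z^{s}}$, and show it is nonpositive using the chain rule together with $\entrp{Z^{\bar s}\vert X}=\entrp{Z^{\bar s}\vert X,Z^{s}}\leq\entrp{Z^{\bar s}\vert Z^{s}}$. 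Both derivations identify the same slack, $\mathcal{UB}-\entrp{X\vert Z}=\mathcal{I}(X;Z^{\bar s}\mid Z^{s})$, and your version has the merit of making that gap explicit and of running in parallel with the lower-bound machinery. The one substantive difference is that your route invokes the conditional independence of $Z^{s}$ and $Z^{\bar s}$ given $X$ (through Lemma~\ref{lem:Factorization} and the step $\entrp{Z^{\bar s}\vert X}=\entrp{Z^{\bar s}\vert X,Z^{s}}$), whereas the paper's proof of the upper bound needs no such assumption and therefore holds for arbitrary joint distributions of $(X,Z^{s},Z^{\bar s})$; under the paper's standing measurement-model assumption this costs nothing, but the direct argument is both shorter and slightly more general. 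Your closing caveat about stating the chain rule and conditioning inequality for (expected) differential conditional entropies, or routing everything through nonnegativity of conditional mutual information, is well placed and matches the level of rigor the paper adopts.
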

\begin{proof}
	It is not difficult to show that $\mathcal{H}\left( X \vert Z^{s}\right) - \mathcal{H} \left(X | Z \right) = \mathcal {I} (X \vert Z^{s} ; Z \setminus Z^s)$. Recalling that the mutual information between two random variables is always non-negative,  we get:
	\begin{equation}
		\mathcal{H} \left(X | Z \right) \leq  \mathcal{H}\left( X \vert Z^{s}\right).
		\label{eq:ConditioningArg}
	\end{equation}
	We denote this as the conditioning argument, i.e. conditioning on a random variable always reduces Entropy, and refer to it later. 
 Using Lemma \ref{EntropyBayes} we get 
	$\mathcal{H}\left( X \vert Z^{s}\right) = \mathcal{H} \left(Z^{s} |X\right) +  \mathcal{H} \left(X\right)-\mathcal{H}\left(Z^{s} \right)$.
\end{proof}
\begin{theorem} \label{th: LB}
	The conditional Entropy can be bounded from bellow by:
	\begin{multline} \label{eq: RewardLB}
		\entrp{X \vert Z} \geq \mathcal{LB}\triangleq \entrp{Z^s \mid X}
	+\entrp{Z^{\bar{s}}\vert X} \\
	 -\entrp{Z^{s}} -\entrp{Z^{\bar{s}}}+\entrp{X}.
	\end{multline}
\end{theorem}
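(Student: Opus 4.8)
The plan is to start from the exact factorization of $\entrp{X\mid Z}$ provided by Lemma \ref{lem:Factorization}, namely
\begin{equation*}
\entrp{X\mid Z} = \entrp{Z^{s}\mid X} + \entrp{Z^{\bar s}\mid X} - \entrp{Z^{s},Z^{\bar s}} + \entrp{X},
\end{equation*}
and then replace the joint term $-\entrp{Z^{s},Z^{\bar s}}$ by something no larger, so that the whole right-hand side becomes a valid lower bound. Since we want to \emph{subtract} $\entrp{Z^{s},Z^{\bar s}}$, a lower bound on $\entrp{X\mid Z}$ requires an \emph{upper} bound on $\entrp{Z^{s},Z^{\bar s}}$. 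The natural such bound is subadditivity of entropy (equivalently, non-negativity of mutual information between $Z^{s}$ and $Z^{\bar s}$):
\begin{equation*}
\entrp{Z^{s},Z^{\bar s}} \le \entrp{Z^{s}} + \entrp{Z^{\bar s}}.
\end{equation*}

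So the key steps, in order, are: (i) invoke Lemma \ref{lem:Factorization} to get the exact expression for $\entrp{X\mid Z}$; (ii) note that since $\entrp{Z^{s},Z^{\bar s}}$ enters with a negative sign, decreasing this term (by replacing it with the larger quantity $\entrp{Z^{s}}+\entrp{Z^{\bar s}}$) can only decrease the right-hand side; (iii) apply subadditivity $\entrp{Z^{s},Z^{\bar s}} \le \entrp{Z^{s}}+\entrp{Z^{\bar s}}$, which follows from $\mathcal I(Z^{s};Z^{\bar s}) = \entrp{Z^{s}}+\entrp{Z^{\bar s}} - \entrp{Z^{s},Z^{\bar s}} \ge 0$; (iv) substitute to obtain
\begin{equation*}
\entrp{X\mid Z} \ge \entrp{Z^{s}\mid X} + \entrp{Z^{\bar s}\mid X} - \entrp{Z^{s}} - \entrp{Z^{\bar s}} + \entrp{X},
\end{equation*}
which is exactly $\mathcal{LB}$ as stated. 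The conditional terms $\entrp{Z^{s}\mid X}$ and $\entrp{Z^{\bar s}\mid X}$ are already in the desired form because of the conditional independence $\prob{Z^{s},Z^{\bar s}\mid X} = \prob{Z^{s}\mid X}\prob{Z^{\bar s}\mid X}$ used in Lemma \ref{lem:Factorization}, so no further manipulation is needed there.

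There is essentially no hard obstacle here — the proof is a one-line consequence of Lemma \ref{lem:Factorization} plus subadditivity of (differential) entropy. The only point that warrants a sentence of care is that the statement is meant to hold for ``arbitrary distributions,'' including the differential-entropy case, so one should note that subadditivity $\entrp{Z^{s},Z^{\bar s}}\le\entrp{Z^{s}}+\entrp{Z^{\bar s}}$ holds for differential entropy as well (it is again just $\mathcal I(Z^{s};Z^{\bar s})\ge 0$, which is valid for continuous random variables), so the bound is not limited to the discrete setting. Contrasting with Theorem \ref{th: UB}, which used the ``conditioning argument'' $\entrp{X\mid Z}\le\entrp{X\mid Z^{s}}$ to drop $Z^{\bar s}$ entirely, here we instead keep $Z^{\bar s}$ inside the conditional-entropy terms but decouple it from $Z^{s}$ in the marginal term; this is what makes $\mathcal{LB}$ depend on both $Z^{s}$ and $Z^{\bar s}$ while $\mathcal{UB}$ depends only on $Z^{s}$, consistent with the dependency structure announced in Section \ref{sec:Partitioning}.
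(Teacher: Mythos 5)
Your proposal is correct and follows essentially the same route as the paper: the paper also combines the factorization of Lemma \ref{lem:Factorization} with subadditivity $\entrp{Z^{s},Z^{\bar s}}\le\entrp{Z^{s}}+\entrp{Z^{\bar s}}$, merely deriving that subadditivity via the chain rule plus the conditioning argument of \eqref{eq:ConditioningArg} rather than citing $\mathcal{I}(Z^{s};Z^{\bar s})\ge 0$ directly, which is an equivalent justification. Your added remark that the argument also covers differential entropy is a fine (and harmless) clarification consistent with the paper's discussion after the theorem.
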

\begin{proof}	
	The joint Entropy of the measurements can be written as $\mathcal{H}\left(Z^{s},Z^{\bar{s}}\right)=\mathcal{H}\left(Z^{s}\vert Z^{\bar{s}}\right)+\mathcal{H}\left(Z^{\bar{s}}\right)$. From the conditioning argument in \eqref{eq:ConditioningArg}, we get $\mathcal{H}\left(Z^{s}\vert Z^{\bar{s}}\right) \leq \mathcal{H}\left(Z^{s}\right)$,   such that $\mathcal{H}\left(Z^{s},Z^{\bar{s}}\right)\leq\mathcal{H}\left(Z^{\bar{s}}\right)+\mathcal{H}\left(Z^{s}\right)$. Rearranging the last inequality concludes the proof.
\end{proof}
\normalsize

We can think of this lower bound from the perspective of mutual information: the difference between  the original quantity $\mathcal{H}\left(Z^{s},Z^{\bar{s}}\right)$ and the quantities $\mathcal{H}\left(Z^{s}\right)$ and  $\mathcal{H}\left(Z^{\bar{s}}\right)$ is exactly $\mathcal{I}\left(Z^{s};Z^{\bar{s}}\right)$, such that the lower bound double counts the mutual information between the measurement sets, see Fig.~\ref{fig: MutualInfoFig}.

\begin{figure} [!h] 
	\hspace*{+25pt}
	\includegraphics[scale=0.5]{./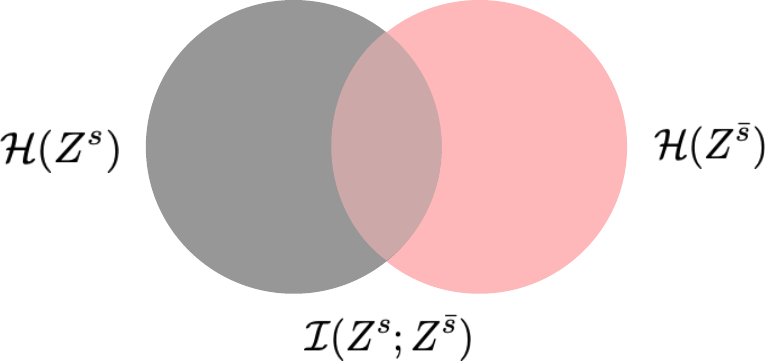}
	\scriptsize
	\caption{\scriptsize Visualizing the marginal Entropies of two measurement variables, the lower bound double counts the overlapping region which is the mutual information between those variables.}  
	\normalsize
	\label{fig: action selection with bounds}
	\vspace{6pt}
	\label{fig: MutualInfoFig}
\end{figure}

In some specific cases, it might be easier to work with posterior beliefs, i.e., with expressions of the form $\prob{X | \cdot }$. Such cases include situations where we have closed form expression to a Bayesian belief update, for example when the belief is modeled as a Gaussian.
In such cases it makes sense to rearrange the bounds as follows: 
\begin{corollary} The conditional Entropy can be bounded by:
	\begin{align}
	\begin{split} \label{eq:RewardLBParam}
	\mathcal{LB} = \entrp{X | Z^{s}} +\entrp{X | Z^{\bar{s}}}  - \entrp{X},
	\end{split}\\
	\begin{split} \label{eq:RewardUBParam}
	\mathcal{UB} = \entrp{X | Z^{s}}.
	\end{split}	
	\end{align}

\end{corollary}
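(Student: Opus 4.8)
The plan is to derive the corollary directly from Theorems~\ref{th: UB} and~\ref{th: LB} by applying Lemma~\ref{EntropyBayes} in reverse to the measurement terms. The key observation is that both $\mathcal{UB}$ and $\mathcal{LB}$ as stated in~\eqref{eq: RewardUB} and~\eqref{eq: RewardLB} are written in terms of likelihood entropies $\entrp{Z^s \mid X}$, $\entrp{Z^{\bar s} \mid X}$ and measurement entropies $\entrp{Z^s}$, $\entrp{Z^{\bar s}}$, whereas the corollary re-expresses everything through the posterior entropies $\entrp{X \mid Z^s}$ and $\entrp{X \mid Z^{\bar s}}$.

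First I would handle the upper bound. Lemma~\ref{EntropyBayes}, applied to the pair $(X, Z^s)$, gives $\entrp{X \mid Z^s} = \entrp{Z^s \mid X} + \entrp{X} - \entrp{Z^s}$, which is exactly the right-hand side of~\eqref{eq: RewardUB}. Substituting yields $\mathcal{UB} = \entrp{X \mid Z^s}$, establishing~\eqref{eq:RewardUBParam}. This step is essentially immediate once the lemma is invoked with the correct variable pairing.

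Next I would treat the lower bound. Apply Lemma~\ref{EntropyBayes} twice: once to $(X, Z^s)$ to get $\entrp{Z^s \mid X} - \entrp{Z^s} = \entrp{X \mid Z^s} - \entrp{X}$, and once to $(X, Z^{\bar s})$ to get $\entrp{Z^{\bar s} \mid X} - \entrp{Z^{\bar s}} = \entrp{X \mid Z^{\bar s}} - \entrp{X}$. The right-hand side of~\eqref{eq: RewardLB} is $\bigl(\entrp{Z^s \mid X} - \entrp{Z^s}\bigr) + \bigl(\entrp{Z^{\bar s}\mid X} - \entrp{Z^{\bar s}}\bigr) + \entrp{X}$; substituting the two identities turns this into $\bigl(\entrp{X \mid Z^s} - \entrp{X}\bigr) + \bigl(\entrp{X \mid Z^{\bar s}} - \entrp{X}\bigr) + \entrp{X} = \entrp{X \mid Z^s} + \entrp{X \mid Z^{\bar s}} - \entrp{X}$, which is~\eqref{eq:RewardLBParam}.

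Since this is purely an algebraic rewriting of already-proven bounds, there is no real obstacle; the only thing to be careful about is bookkeeping of which variable pair Lemma~\ref{EntropyBayes} is applied to and the sign of each term, so that the $\entrp{X}$ contributions collapse correctly (two are cancelled against the two $-\entrp{Z^s}, -\entrp{Z^{\bar s}}$ conversions, leaving a single $-\entrp{X}$ in the lower bound and none in the upper bound). I would state the proof in two short sentences, one per bound, referencing Lemma~\ref{EntropyBayes} and Theorems~\ref{th: UB}--\ref{th: LB}.
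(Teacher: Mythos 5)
Your proposal is correct and follows exactly the paper's route: the paper also obtains the corollary directly from \eqref{eq: RewardUB} and \eqref{eq: RewardLB} by invoking Lemma~\ref{EntropyBayes} (applied to the pairs $(X,Z^{s})$ and $(X,Z^{\bar{s}})$); you have merely spelled out the algebra that the paper leaves implicit.
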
  
This can be obtained directly from \eqref{eq: RewardUB} and \eqref{eq: RewardLB} using Lemma \ref{EntropyBayes}.

We note that if one does not have access to the underlying belief distribution, one must resort to approximation of the expected reward, i.e.~conditional Entropy. This is usually done by sampling the closed-form expressions for the motion and measurement models. Such cases are outside of the scope of this work.

\subsection{Bounds with Hierarchical Partitioning}

In the previous section, we formed bounds based on a partitioning of measurement sets, by double counting the mutual information between said sets. This intuition applies to the hierarchical partitioning from Section \ref{sec:Partitioning} as well, allowing us to create a hierarchical notion of the bounds, starting with the lower bound. To formulate this idea we define a new operator: 
\begin{equation} \label{eq: g-Operator}
	g(  Z^{s} ,Z^{\bar{s}}) =\entrp{X |Z^{s}} + \entrp{X | Z^{\bar{s}}} - \entrp{X},
\end{equation}
where $g( Z ,\emptyset) =g(\emptyset, Z ) \triangleq \entrp{X | Z} -\entrp{X}$, and 
$g( \emptyset ,\emptyset) \triangleq -\entrp{X} $. 

Using this operator, \eqref{eq:RewardLBParam} can be expressed as $\mathcal{LB} = g( Z^{s} ,Z^{\bar{s}})$ and \eqref{eq:RewardUBParam} can be expressed as $\mathcal{UB} = g( Z^{s} ,\emptyset) +\entrp{X}$. Given that measurements sets can be hierarchically partitioned further we can formulate their bounds. 
\begin{theorem} \label{PartitionDepthLB} For the sets $Z^{s} ,Z^{\bar{s}}$, and their respective children $Z^{s_1},Z^{s_2} ,Z^{\bar{s_1}}, Z^{\bar{s_2}}$ the following holds:
\begin{align*}
     &g(Z^{s} ,Z^{\bar{s}})  \geq \\
     &g(Z^{s_1},Z^{s_2}) + g(\emptyset ,Z^{\bar{s}})  \geq \\
     &g(Z^{s_1},Z^{s_2}) + g(Z^{\bar{s_1}}, Z^{\bar{s_2}}) + g(\emptyset ,\emptyset) 
\end{align*}
\end{theorem}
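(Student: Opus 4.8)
The plan is to unfold the definition of the operator $g$ from \eqref{eq: g-Operator} and reduce each of the two inequalities in the displayed chain to a single invocation of the lower bound of Theorem~\ref{th: LB} (equivalently, its parametric form \eqref{eq:RewardLBParam}), using that the children of a partitioned set form a partition of it, i.e.\ $Z^{s}=Z^{s_1}\cup Z^{s_2}$ and $Z^{\bar s}=Z^{\bar s_1}\cup Z^{\bar s_2}$. First I would isolate the ``atomic'' inequality to be used twice: for any two disjoint subsets $A,B$ of observation components, $\entrp{X\mid A\cup B}\ge \entrp{X\mid A}+\entrp{X\mid B}-\entrp{X}=g(A,B)$. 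This is precisely the lower bound of Theorem~\ref{th: LB} in the form \eqref{eq:RewardLBParam}, applied with the role of ``$Z$'' played by $A\cup B$ and the partition $(A,B)$; it rests only on conditional independence of measurements given the state and subadditivity of entropy, both already used in Lemma~\ref{lem:Factorization} and Theorem~\ref{th: LB}.

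For the first inequality I would expand $g(Z^{s},Z^{\bar s})=\entrp{X\mid Z^{s}}+\entrp{X\mid Z^{\bar s}}-\entrp{X}$ and the right-hand side $g(Z^{s_1},Z^{s_2})+g(\emptyset,Z^{\bar s})=\bigl[\entrp{X\mid Z^{s_1}}+\entrp{X\mid Z^{s_2}}-\entrp{X}\bigr]+\bigl[\entrp{X\mid Z^{\bar s}}-\entrp{X}\bigr]$, using the convention $g(\emptyset,Z^{\bar s})=\entrp{X\mid Z^{\bar s}}-\entrp{X}$. Cancelling the common term $\entrp{X\mid Z^{\bar s}}-\entrp{X}$, the claim collapses to $\entrp{X\mid Z^{s}}\ge \entrp{X\mid Z^{s_1}}+\entrp{X\mid Z^{s_2}}-\entrp{X}$, which is the atomic inequality with $A=Z^{s_1}$, $B=Z^{s_2}$, $A\cup B=Z^{s}$.

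For the second inequality I would cancel the common summand $g(Z^{s_1},Z^{s_2})$ from both sides, reducing it to $g(\emptyset,Z^{\bar s})\ge g(Z^{\bar s_1},Z^{\bar s_2})+g(\emptyset,\emptyset)$; expanding with $g(\emptyset,\emptyset)=-\entrp{X}$ and the convention for $g(\emptyset,\cdot)$ this becomes $\entrp{X\mid Z^{\bar s}}\ge \entrp{X\mid Z^{\bar s_1}}+\entrp{X\mid Z^{\bar s_2}}-\entrp{X}$, i.e.\ the atomic inequality again, now for the partition $Z^{\bar s}=Z^{\bar s_1}\cup Z^{\bar s_2}$. Concatenating the two established inequalities yields the three-line display, completing the argument.

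I expect essentially no technical obstacle here; the only care required is bookkeeping the degenerate cases of $g$ (namely $g(\cdot,\emptyset)$ and $g(\emptyset,\emptyset)$) so the cancellations are exact, and stating explicitly that children partition their parent so that Theorem~\ref{th: LB} applies verbatim. For a self-contained version one could instead re-derive the atomic inequality directly from Lemma~\ref{EntropyBayes}, $\entrp{A\cup B\mid X}=\entrp{A\mid X}+\entrp{B\mid X}$, and $\entrp{A\cup B}\le\entrp{A}+\entrp{B}$, but invoking Theorem~\ref{th: LB} is cleaner. It is worth remarking that the same two-step pattern iterates down the partition tree of Fig.~\ref{fig: PartitionTree}, producing a monotonically decreasing family of lower bounds indexed by partition depth, which is exactly the ``adaptivity'' claimed for these bounds.
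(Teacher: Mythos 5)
Your proposal is correct and follows essentially the same route as the paper: both reduce each inequality in the chain to a single re-application of the lower bound \eqref{eq:RewardLBParam} at the child level (i.e.\ with $Z^{s}$, respectively $Z^{\bar s}$, playing the role of $Z$). Your explicit bookkeeping of the conventions $g(\emptyset,\cdot)$ and $g(\emptyset,\emptyset)$ and the cancellation steps is just a more detailed spelling-out of the paper's substitution argument.
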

\begin{proof}
From \eqref{eq:RewardLBParam} we know that $$\entrp{X | Z} \geq \entrp{X | Z^{s}} +\entrp{X | Z^{\bar{s}}}  - \entrp{X},$$ 
substituting $\entrp{X | Z^s}$ instead of $\entrp{X | Z}$ yields $$\entrp{X | Z^s} \geq \entrp{X | Z^{s_1}} +\entrp{X | Z^{{s_2}}}  - \entrp{X},$$ which proves the first inequality. Doing the same for $\entrp{X | Z^{\bar{s}}}$ proves the second inequality. 
\end{proof}

Each of the quantities in Theorem~\ref{PartitionDepthLB} is a lower bound on the expected reward by itself. As for the upper bound, the Entropy of each of the child sets of a given set, is an upper bound on the Entropy of that given parent set.
\begin{theorem} \label{PartitionDepthUB} For the set $Z^{s}$, and its children $Z^{s_1},Z^{s_2}$ the following holds:
\begin{equation}
	\entrp{X \vert Z^{s}} \leq \entrp{X \vert Z^{s_1}} \land \entrp{X \vert Z^{s}} \leq \entrp{X \vert Z^{s_2}}. 
\end{equation}
\end{theorem}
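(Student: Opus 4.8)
The plan is to prove Theorem~\ref{PartitionDepthUB} by the same conditioning argument used in Theorems~\ref{th: UB} and~\ref{PartitionDepthLB}. The key observation is that $Z^{s_1}$ is a subset of the components comprising $Z^{s}$, so $Z^{s} = Z^{s_1} \cup Z^{s_2}$ and conditioning on the larger set $Z^{s}$ can only reduce the entropy of $X$ relative to conditioning on the smaller set $Z^{s_1}$.

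First I would invoke the conditioning argument from~\eqref{eq:ConditioningArg}: for any random variables, additional conditioning never increases entropy, hence $\entrp{X \vert Z^{s}} = \entrp{X \vert Z^{s_1}, Z^{s_2}} \leq \entrp{X \vert Z^{s_1}}$. The same argument applied with the roles of $Z^{s_1}$ and $Z^{s_2}$ swapped gives $\entrp{X \vert Z^{s}} \leq \entrp{X \vert Z^{s_2}}$. Taking the conjunction of the two inequalities yields the claim. Equivalently, one can phrase this in terms of mutual information: $\entrp{X \vert Z^{s_1}} - \entrp{X \vert Z^{s}} = \mathcal{I}(X ; Z^{s_2} \mid Z^{s_1}) \geq 0$, which makes the nonnegativity explicit and directly parallels the proof of Theorem~\ref{th: UB}.

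There is essentially no obstacle here beyond making sure the set inclusion $Z^{s_1}, Z^{s_2} \subseteq Z^{s}$ with $Z^{s} = Z^{s_1} \cup Z^{s_2}$ is stated cleanly — this is exactly the hierarchical-partitioning setup of Section~\ref{sec:Partitioning}, where a parent set is split into a base subset and its complement. The only thing worth a sentence of care is noting that, unlike the lower bound in Theorem~\ref{PartitionDepthLB} which requires the explicit $g$-operator bookkeeping and a substitution into~\eqref{eq:RewardLBParam}, the upper bound here follows immediately and monotonically: each descent in the partition tree replaces a conditioning set by one of its (smaller) children, and the entropy bound correspondingly relaxes. Hence any $\entrp{X \vert Z^{s'}}$ for $Z^{s'}$ an ancestor of the leaf containing the relevant components is itself a valid upper bound on $\entrp{X \vert Z}$ via transitivity with Theorem~\ref{th: UB}, which is the point of the hierarchy.
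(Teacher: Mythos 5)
Your proof is correct and matches the paper's argument, which likewise derives the result directly from the conditioning argument of \eqref{eq:ConditioningArg}: since $Z^{s}=Z^{s_1}\cup Z^{s_2}$, conditioning on the full parent set can only reduce entropy relative to conditioning on either child. Your mutual-information restatement is just an explicit spelling-out of the same one-line argument the paper gives.
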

\begin{proof}
	This follows directly from the conditioning argument.
\end{proof}

We can use the above theorems to perform further partitioning of the measurement sets. Theorem~\ref{PartitionDepthLB} shows that we can mix different partition depths for the lower bound, while Theorem~\ref{PartitionDepthUB} shows that any node in the partition tree can be used to form upper bound, see partition tree Fig.~\ref{fig: PartitionTreeBounds} for example. 

\begin{figure} [!h] 
	\hspace*{35pt}
	\includegraphics[scale=0.5]{./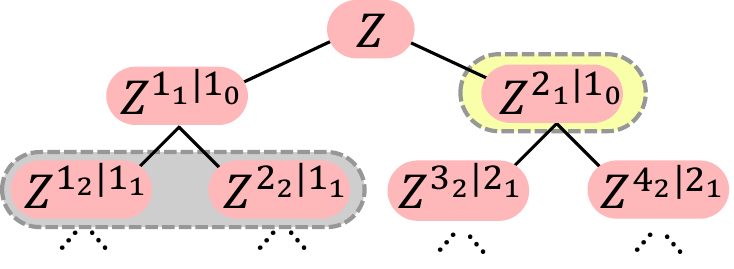}
	\scriptsize
	\caption{\scriptsize Any combination of children nodes, that their union equals to the parent node, can make up a lower bound on that parent node. Any child node at any depth can make up an upper bound on a parent node. For instance, the node highlighted in yellow can form an upper bound, while the nodes highlighted in gray and yellow can form a lower bound.}
	\normalsize
	\vspace{6pt}
	\label{fig: PartitionTreeBounds}
\end{figure} 

We also note that we have the ability to adaptively change those bounds, by moving between partitioning levels, as well as by moving measurements from one set to its compliment. In the next section we show the convergence of the bounds, so by adaptively changing the bounds we also control how tight they are. 

\subsection{Analysis of the Bounds} \label{subsec: Analysis}
In this section, we analyze the properties of the derived bounds. We look at the convergence and monotonicity of the bounds, both as a function of the sets, for a given partition depth, and as a function of the partition depth. We then examine the computational complexity of obtaining the bounds.  
\subsubsection*{Convergence} 
For a given partition depth of $d\in [1,\log_2m]$, we show that the bounds converge to the bounds of the parent depth, $d-1$. In particular, when the partition depth is $1$ the bounds converge to the original expected reward.    
For $Z^s \cup Z^{\bar s} \subseteq Z$, the upper bound converges when we add variables to the set $Z^s$, while the lower bound converges when we remove variables from the set $Z^{\bar s}$ and add them to the set $Z^s$. We show the proof for the first partition depth, but it is valid for any arbitrary depth.
\begin{theorem}
	If $Z^s \rightarrow Z$ and $Z^{\bar s} \rightarrow \emptyset$, then $\entrp{X | Z^{s}} \rightarrow \entrp{X | Z}$ and $g(Z^{s} ,Z^{\bar{s}}) \rightarrow \entrp{X | Z}$. 
\end{theorem}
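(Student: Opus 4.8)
The plan is to read the two stated limits not as topological limits but as the \emph{terminal state of a finite refinement}. Since $Z=(Z^1,\dots,Z^m)$ has only $m$ components, the notation $Z^s\to Z$, $Z^{\bar s}\to\emptyset$ refers to the process described above Theorem~\ref{PartitionDepthUB}: repeatedly move one component from $Z^{\bar s}$ into $Z^s$. After at most $m$ such moves we reach $Z^s=Z$ and $Z^{\bar s}=\emptyset$ exactly, so it suffices to show that (i) each quantity is monotone along such a sequence, (ii) it is sandwiched between its running value and $\entrp{X\mid Z}$, and (iii) it equals $\entrp{X\mid Z}$ at termination. Items (i) and (ii) are what make ``convergence'' a meaningful statement (the bound approaches the target from the correct side without oscillating); item (iii) is then immediate.

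First I would handle $\entrp{X\mid Z^s}$. Enlarging $Z^s$ can only decrease it by the conditioning argument \eqref{eq:ConditioningArg} — this is precisely Theorem~\ref{PartitionDepthUB} read from a child towards its parent — and since $Z^s\subseteq Z$ the same argument gives $\entrp{X\mid Z^s}\geq\entrp{X\mid Z}$. At termination $Z^s=Z$, so $\entrp{X\mid Z^s}=\entrp{X\mid Z}$; a monotone sequence on a finite chain that attains its infimum equals it, which gives the first claim (and, via Theorem~\ref{th: UB} together with \eqref{eq:RewardUBParam}, shows $\mathcal{UB}$ collapses onto the true expected reward). For $g$ I would use the expanded form $g(Z^s,Z^{\bar s})=\entrp{X\mid Z^s}+\entrp{X\mid Z^{\bar s}}-\entrp{X}$ from \eqref{eq: g-Operator}: the first term converges to $\entrp{X\mid Z}$ by the previous step, while for the second term, deleting components from $Z^{\bar s}$ can only increase $\entrp{X\mid Z^{\bar s}}$ (conditioning argument again), it is bounded above by $\entrp{X\mid\emptyset}=\entrp{X}$, and at termination $Z^{\bar s}=\emptyset$ so equality holds. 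Hence $g(Z^s,Z^{\bar s})\to\entrp{X\mid Z}+\entrp{X}-\entrp{X}=\entrp{X\mid Z}$, and combined with Theorems~\ref{th: LB} and \ref{PartitionDepthLB} this shows $\mathcal{LB}$ collapses onto the true expected reward as well, recovering the claim that partition depth $1$ returns the original problem.

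The main obstacle here is bookkeeping rather than mathematics. One must state carefully what ``$\to$'' means, namely attainment after finitely many moves on the (finite) lattice of component-subsets, rather than a genuine limit; and one must reconcile the boundary conventions for $g$ — in particular that $\entrp{X\mid\emptyset}$ must be read as $\entrp{X}$ in the expanded formula so that it agrees with the special-case definitions $g(Z,\emptyset)\triangleq\entrp{X\mid Z}-\entrp{X}$ and $g(\emptyset,\emptyset)\triangleq-\entrp{X}$ and with the relation $\mathcal{UB}=g(Z^s,\emptyset)+\entrp{X}$. Once these conventions are fixed, every inequality needed is an instance of the single conditioning argument \eqref{eq:ConditioningArg} already established, so the proof is short.
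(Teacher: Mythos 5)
Your proposal is correct and follows essentially the same route as the paper's proof: the paper also moves measurement components one at a time from $Z^{\bar s}$ into $Z^s$, applies the conditioning argument \eqref{eq:ConditioningArg} at each step to get $\entrp{X\mid Z^s}$ decreasing to $\entrp{X\mid Z}$ and $\entrp{X\mid Z^{\bar s}}$ increasing to $\entrp{X\mid\emptyset}=\entrp{X}$, and then substitutes into $g(Z^s,Z^{\bar s})$. Your added bookkeeping (reading ``$\to$'' as finite attainment and checking the boundary convention for $g$) merely makes explicit what the paper leaves implicit.
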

\begin{proof}
	WLOG, assuming $Z^s\in \mathbb R^n$ and $Z^{\bar s}\in \mathbb R^{m-n}$ as in \eqref{MasurementPartitioning}, we start with the upper bound,
	$\mathcal{UB} = \entrp{X | Z^s}.$
	Using the conditioning argument, $\entrp{X | Z^s\cup \{Z^{n+1}\}} \leq \entrp{X | Z^s}$ where $ Z^{n+1}\subseteq Z^{\bar s}$ but $ Z^{n+1} \not\subset Z^{s}$. We continue adding variables to the set $Z^s$ in this way, until $Z^s \cup \{Z^{n+1}\} \cup \hdots \cup  \{Z^{m}\} =Z$ and $\entrp{X | Z^s \cup \{Z^{n+1}\} \cup \hdots \cup  \{Z^{m}\}} = \entrp{X | Z}$. \\
	As for the lower bound: $ \mathcal{LB} = g(Z^{s} ,Z^{\bar{s}}) = \entrp{X | Z^s} +\entrp{X | Z^{\bar s}} - \entrp{X}.$
	 We use the same argument, while working in the opposite direction with $Z^{\bar s}$: $\entrp{X | Z^{\bar s}} \leq \entrp{X \vert  Z^{\bar s} \setminus \{Z^{n+1}\}}$, where $Z^{n+1}\subseteq Z^{\bar s}$. We continue removing variables from the set $Z^{\bar s}$ in this way until $Z^{\bar s} \setminus \{Z^{n+1}\} \setminus \hdots \setminus \{Z^{m}\}= \emptyset$ and $\entrp{X | \emptyset} = \entrp{X}$. Plugging into the expression for $\mathcal{LB}$ we get $\entrp{X | Z} +\entrp{X | \emptyset} - \entrp{X}=\entrp{X | Z}$  
\end{proof}

\subsubsection*{Monotonicity} From the conditioning argument, we can see that $\mathcal{UB}$ monotonically converges to the expected reward, or in different words, the Entropy is monotone in the size of the measurement set. This holds true both across different partitioning depths and within a given partitioning depth. However, we cannot say the same about $\mathcal{LB}$ since it depends on the change of both $\entrp{X | Z^{s}}$ and $\entrp{X | Z^{\bar{s}}}$. We always move measurements from one set to the other, such that these two quantities change in opposition to one another. We cannot say a-priori what change is greater and thus cannot deduce monotonicity. In fact, it can be proven that this bound is non-monotone using the sub-modularity of the Entropy in the measurements, but it is beyond the scope of this work. 
\subsubsection*{Computational Complexity}
It is only rational to use the bounds instead of the full calculation of the expected reward function, if computing the bounds is cheaper computationally. We are now going to compare these two calculations.

For general distributions, we can compare the calculation as a function of the observation space size. The baseline calculation involves evaluating the following quantities (omitting history and actions, see lemma \ref{lem:Factorization}): 
\begin{equation}
	 \entrp{Z^{s} \vert X},\entrp{Z^{\bar{s}} \vert X},\entrp{X},\entrp{Z^{s},Z^{\bar{s}}}.	 
\end{equation}
Using the bounds we need to evaluate (see theorems \ref{th: LB} and \ref{th: UB}):
\begin{equation} 
	\entrp{Z^{s} \vert X}
	,\entrp{Z^{\bar{s}} \vert X} ,\entrp{Z^{\bar{s}}},\entrp{Z^{s}},\entrp{X}.
\end{equation}
Since that the prior Entropy is not a function of the action or expected measurements, we can calculate it once for each planning session.
Overall, the difference between the expected reward and the bounds boils down to the difference between the joint versus the marginal Entropy of the measurements, i.e. $\entrp{Z^{s},Z^{\bar{s}}}$ versus $\entrp{Z^{\bar{s}}}$ and $\entrp{Z^{s}}$. The baseline calculation is:
\begin{multline} \label{joint Entropy baseline}
	 \mathcal{H}\left(Z\right) \! =	\!  - \! \int_{Z^{s}}\! \int_{Z^{\bar{s}}} \! \prob{Z^{s},Z^{\bar{s}}} \log\prob{Z^{s},Z^{\bar{s}}} dZ^{s} dZ^{\bar{s}}. 
	\end{multline} 
Assuming that the observation space is finite and countable, the cost of evaluating the integral terms is a function of the random variables. 
Evaluating \eqref{joint Entropy baseline} is of the order of $O(| \mathcal{Z}^{s} | | \mathcal{Z}^{\bar{s}} |)$, while evaluating the simplified terms $\mathcal{H}\left(Z^{s}\right),\mathcal{H}\left(Z^{\bar{s}}\right)$, is of the order of $O(| \mathcal{Z}^{s} |+ | \mathcal{Z}^{\bar{s}} |)$.  $\mathcal{Z}^s$ and $\mathcal{Z}^{\bar{s}}$ are entirely defined by the measurement model, see example on section \ref{sec:Partitioning}

Note that in practice we do not have access to the joint or marginal distribution over the measurements, only the measurement model. Marginalization over the state yields:
\begin{multline} 
	 \mathcal{H}\left(Z\right) =
	 -\int_{Z} \int_{X} \prob{Z,X} \log \int_{X'} \prob{Z,X'} dX' dZ dX  \\ =\!\!
	-\int_{Z} \int_{X} \prob{Z\vert X} \prob{X} \log \int_{X'} \prob{Z\vert X'} \prob{X'}dX' dZ dX  
	\end{multline} 
The simplification is of the same factor, only now it is magnified by the state vector size to the second power; $O(| \mathcal{Z}^{s} | | \mathcal{Z}^{\bar{s}} ||\mathcal{X}^2|)$ for the baseline, and $O((| \mathcal{Z}^{s} | +| \mathcal{Z}^{\bar{s}} |)|\mathcal{X}^2|)$ for the bounds.
\subsection{High Dimensional State}
In this section we discuss what the measurement simplification looks like when the state space is high dimensional. Specifically, we consider an active SLAM formulation as in Section \ref{active slam}. Utilizing data association information from \eqref{eq: BeliefModelFactorization} and \eqref{eq: InvolvedState}, we can further simplify the bounds \eqref{eq: RewardUB} and  \eqref{eq: RewardLB}: 
	\begin{multline} 
		\mathcal{LB}\triangleq\mathcal{H}\left(Z^{s}\vert X^{\text{inv}_s}\right)
	+\mathcal{H}\left(Z^{\bar{s}}\vert X^{\text{inv}_{\bar{s}}}\right) \\ -\mathcal{H}\left(Z^{s}\right)-\mathcal{H}\left(Z^{\bar{s}}\right)+\mathcal{H}\left(X\right),
	\end{multline}
\begin{equation} 
	\mathcal{UB}\triangleq \mathcal{H}\left(Z^{s}\vert X^{\text{inv}_s}\right)+\mathcal{H}\left(X\right)-\mathcal{H}\left(Z^{s}\right),
	\end{equation}
where $X^{\text{inv}_s}$ and $X^{\text{inv}_{\bar{s}}}$, are the states involved in the random measurements $Z^{s}$ and $Z^{\bar{s}}$, respectively, as determined by $\beta$. 
\subsection{Gaussian Belief}

The bounds shown in the previous section, hold for general belief distributions. In this section we develop a specific form of the bounds, considering Gaussian distributions and a high-dimensional state in the context of active SLAM. 

In the case of Gaussian distributions, we can replace the Entropy term in \eqref{eq:ExpEntropyGeneral} for its closed form expression, combining \eqref{GaussianEntropy} and \eqref{InfoMatUpdate}.
For a lookahead step $i\in[k+1,k+\ell]$ the expected Entropy can be expressed as: 
\begin{equation} \label{eq: EntrGaussianExpected} 
	\expt{Z_{k+1:i}}{\entrp{X_{i} | h_{i}}} = 
	C -\frac{1}{2} \expt{Z_{k+1:i}}{\ln \left | \Lambda_k^{\text{Aug}}+A_i(z)^T A_i(z)  \right |},
\end{equation}
where $ A_{i}(z)=\tilde{A}_{i}(z_{k+1:i})$, is the collective Jacobian for the entire planning horizon as a function of the measurements, and $C \triangleq N\ln(2\pi e)$. 

 For convenience we define: $f(\Lambda, A) \triangleq | \Lambda+A^T A |$. Since we are interested in the Jacobian of the measurement but not the motion factors, we choose to present the expected Entropy in the following form:
 \begin{equation}
 	\expt{Z_{k+1:i}}{\entrp{X_{i} | h_{i}}} = 
	C -\frac{1}{2} \expt{Z_{k+1:i}}{\ln f\left(\Lambda^{\textsc{Aug}-}_{k},A_{i}(z)\right)},
 \end{equation}
 where $\Lambda^{\textsc{Aug}-}_{k}$ is the propagated belief, i.e. without taking into account measurements until time step $i$, augmented with zeros and the measurements vector defined in \eqref{MasurementVector} becomes $Z_{k+1:i}$. 
 
 Utilizing \eqref{eq:RewardLBParam}, \eqref{eq:RewardUBParam} and \eqref{eq: EntrGaussianExpected}, we are ready to present the expected reward bounds for the Gaussian case. The bounds apply in expectation, and are more efficient to calculate than State-of-the-art (SOTA) methods per sample of measurements under the expectation operator.
 \begin{figure} [!h] 
	\hspace*{50pt}
	\includegraphics[scale=0.5]{./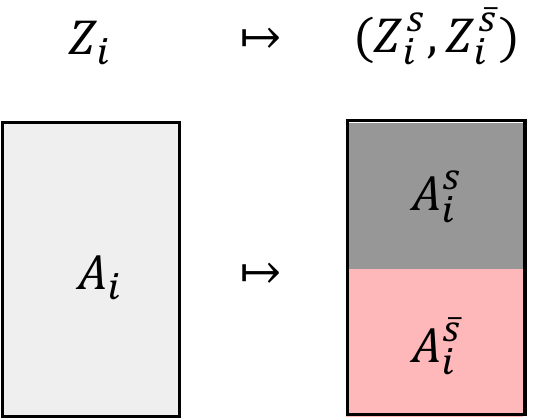}
	\scriptsize
	\caption{\scriptsize Illustration of a possible partitioning of a measurement collective Jacobian, given some partitioning of $Z_i$.}
	\normalsize
	\vspace{6pt}
	\label{fig: JacobianPartition}
\end{figure} 
 \begin{theorem}
	For the $i$th look ahead step, the expected Entropy of a Gaussian belief can be bounded by: 
\begin{align} \label{eq: RewardLBGaussian}
		\mathcal{LB} &= C- \!\!
		 \frac{1}{2} \expt{Z_{k+1:i}}{\ln \frac{f\left(\Lambda^{\textsc{Aug}-}_{k},A^s_{i}\right) \! \cdot \!\! f\left(\Lambda^{\textsc{Aug}-}_{k},A^{\bar s}_{i}\right)
			}{\vert\Lambda^{\textsc{Aug}-}_{k}\vert }},
		\\
		\mathcal{UB}&= C -\frac{1}{2} \expt{Z_{k+1:i}}{\ln f\left(\Lambda^{\textsc{Aug}-}_{k},A^s_{i}\right)},
		 \label{eq: RewardUBGaussian}
\end{align}
\end{theorem}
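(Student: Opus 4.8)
The plan is to specialize the general-belief bounds of the corollary, \eqref{eq:RewardLBParam}--\eqref{eq:RewardUBParam}, namely $\mathcal{LB}=\entrp{X\mid Z^{s}}+\entrp{X\mid Z^{\bar s}}-\entrp{X}$ and $\mathcal{UB}=\entrp{X\mid Z^{s}}$, to the Gaussian case by replacing each conditional entropy with the closed form of \eqref{GaussianEntropy} and then pushing the expectation over $Z_{k+1:i}$ through the sum. The single ingredient that has to be supplied is a Gaussian closed form for a posterior entropy in which only a \emph{subset} of the future measurement factors has been fused in.

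First I would observe that the partition $Z=Z^{s}\cup Z^{\bar s}$ of \eqref{MasurementPartitioning} induces a partition of the rows of the collective measurement Jacobian $A_i(z)$ into the blocks $A^{s}_i$ and $A^{\bar s}_i$ belonging to the measurement factors of $Z^{s}$ and $Z^{\bar s}$ (Fig.~\ref{fig: JacobianPartition}), so that $A_i^{T}A_i=(A^{s}_i)^{T}A^{s}_i+(A^{\bar s}_i)^{T}A^{\bar s}_i$. Conditioning the propagated, augmented Gaussian prior with information matrix $\Lambda^{\textsc{Aug}-}_{k}$ only on $Z^{s}_{k+1:i}$ then amounts, by the update rule \eqref{InfoMatUpdate}, to adding only the rows of $A^{s}_i$, yielding a Gaussian with information matrix $\Lambda^{\textsc{Aug}-}_{k}+(A^{s}_i)^{T}A^{s}_i$; hence by \eqref{GaussianEntropy}, $\entrp{X_i\mid Z^{s}_{k+1:i}}=C-\tfrac12\expt{Z_{k+1:i}}{\ln f(\Lambda^{\textsc{Aug}-}_{k},A^{s}_i)}$, and the same argument applied to $Z^{\bar s}$, and to no measurement factor at all, gives $\entrp{X_i\mid Z^{\bar s}_{k+1:i}}=C-\tfrac12\expt{Z_{k+1:i}}{\ln f(\Lambda^{\textsc{Aug}-}_{k},A^{\bar s}_i)}$ and $\entrp{X_i}=C-\tfrac12\ln|\Lambda^{\textsc{Aug}-}_{k}|$, all sharing the common augmented dimension $N$ and hence the common constant $C=N\ln(2\pi e)$.

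Substituting these three expressions into $\mathcal{UB}=\entrp{X_i\mid Z^{s}_{k+1:i}}$ is immediate and produces \eqref{eq: RewardUBGaussian}. Substituting into $\mathcal{LB}=\entrp{X_i\mid Z^{s}_{k+1:i}}+\entrp{X_i\mid Z^{\bar s}_{k+1:i}}-\entrp{X_i}$, the three copies of $C$ collapse to one ($C+C-C=C$), linearity of expectation merges the three logarithms into $\expt{Z_{k+1:i}}{\ln\big(f(\Lambda^{\textsc{Aug}-}_{k},A^{s}_i)\,f(\Lambda^{\textsc{Aug}-}_{k},A^{\bar s}_i)/|\Lambda^{\textsc{Aug}-}_{k}|\big)}$, and pulling out the $-\tfrac12$ gives \eqref{eq: RewardLBGaussian}. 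The bound directions are inherited from Theorems~\ref{th: UB} and \ref{th: LB} read under the expectation operator (together with the identity \eqref{eq:ExpEntropyGeneral} relating the expected reward to the conditional entropy).

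I expect the only genuine obstacle to be the second step: justifying that conditioning the Gaussian belief on the subset $Z^{s}$ alone yields \emph{exactly} the information matrix $\Lambda^{\textsc{Aug}-}_{k}+(A^{s}_i)^{T}A^{s}_i$, i.e. that deleting the $Z^{\bar s}$ factors from \eqref{eq: BeliefModelFactorization} corresponds precisely to deleting the matching rows of the collective Jacobian, with correct bookkeeping of which states are augmented. A secondary, essentially notational, point is that in the nonlinear/EKF setting $A^{s}_i$ is evaluated at a linearization point that may depend on the realized measurements, which is exactly why the expectation over $Z_{k+1:i}$ is retained rather than discarded; in the genuinely linear-Gaussian case these Jacobians are deterministic and the expectations are vacuous. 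Everything else is routine algebra with \eqref{GaussianEntropy} and \eqref{InfoMatUpdate}.
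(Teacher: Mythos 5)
Your proposal is correct and follows essentially the same route as the paper: the paper's proof simply plugs the Gaussian closed form \eqref{eq: EntrGaussianExpected} into the general bounds \eqref{eq:RewardLBParam}--\eqref{eq:RewardUBParam} and cancels the constants, exactly as you do. You additionally make explicit the step the paper leaves implicit (that conditioning on $Z^{s}$ alone yields the information matrix $\Lambda^{\textsc{Aug}-}_{k}+(A^{s}_i)^{T}A^{s}_i$ via the row partition of the collective Jacobian), which is a faithful elaboration rather than a different argument.
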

where $A^s_{i}$ and $A^{\bar{s}}_{i}$ are the Jacobian rows associated with the measurements $Z^s_{i}$ and $Z^{\bar{s}}_{i}$, respectively, see Fig.~\ref{fig: JacobianPartition}. 

\begin{proof}
	We have obtained a closed form expression \eqref{eq: EntrGaussianExpected} for the expected Entropy. If we plug in these expressions into \eqref{eq:RewardLBParam} we get: 
\begin{align*}
	\mathcal{H} \left(X_{i} | Z_{i} \right) &\geq 
	C -\frac{1}{2} \expt{Z_{k+1:i}}{\ln f\left(\Lambda^{\textsc{Aug}-}_{k},A^s_{i}\right)} \\
	&+C -\frac{1}{2} \expt{Z_{k+1:i}}{\ln f\left(\Lambda^{\textsc{Aug}-}_{k},A^{\bar s}_{i}\right)}  \\ 
	&- C +\frac{1}{2} \expt{Z_{k+1:i}}{\ln \left \vert\Lambda^{\textsc{Aug}-}_{k}  \right \vert} .
\end{align*}
In a similar way, using \eqref{eq:RewardUBParam} we obtain the upper bound.
\end{proof}
\subsubsection*{Estimation of Expected Entropy} \label{sssec: Estimation}
An empirical expectation is generally calculated by sampling the observation model; for each such sampled observation a posterior distribution of the belief needs to be obtained. The latter usually involves solving a non-linear optimization problem via some iterative linearization method. For each posterior belief, one can calculate the corresponding reward bounds using \eqref{eq: RewardLBGaussian} and \eqref{eq: RewardUBGaussian}, where the expectation operator is approximated by samples. In this case, the performance guarantees are asymptotic, since one assumption that was used to obtain the bounds is only asymptotically valid, namely, the non-negativity of the Mutual Information between measurements. We note that it is possible to formulate non-asymptotic guarantees in this case, but it is outside of the scope of this work and we leave it for future work.

A different approach, would be to make a common assumption for the optimization process within planning, by taking a single iteration step. In the context of planning, this is usually a good approximation for the linearization point when the state prior is informative, and is accurate when the measurement model uses a linear function or for Maximum Likelihood estimation for the measurements \cite{Platt10rss, Indelman15ijrr}. Under said assumption, the Jacobian is only a function of $\beta$ and not of any particular measurement realization.  When the Jacobian is independent of the actual measurement values, we can drop the expectation operator from \eqref{eq: EntrGaussianExpected} and still represent the expected Entropy: 
 \begin{multline}\label{eq:EntrGaussian} 
	\mathcal{H} \left(X_{i} | Z_{k+1:i} \right) = 
	 C-\frac{1}{2}(\ln \left \vert \Lambda_k^{\text{Aug}}+(A_{i})^T\cdot A_{i} \right \vert).
\end{multline}
In this case, the bounds in \eqref{eq: RewardLBGaussian} and \eqref{eq: RewardUBGaussian}, as well as other SOTA calculations should be computed only once per action, i.e., without the expectation operator. \\
\subsubsection*{Methods for Determinant Calculation} \label{sssection: DetMethods}
The computational cost of the bounds in the Gaussian case, is the cost of evaluating the appropriate determinants, $\vert\Lambda^{\textsc{Aug}-}_{k}+(A^s_{i})^T A^s_{i} \vert$ and $ \vert\Lambda^{\textsc{Aug}-}_{k}+(A^{\bar{s}}_{i})^T A^{\bar{s}}_{i}\vert$ . At a first glance, it is not clear that evaluating the bounds is indeed more efficient than evaluating the expected reward, we will now show one method which is, in fact, efficient. 

There are three main methods to calculating the Entropy of the posterior belief: calculating the determinant of the posterior information matrix (denoted to as baseline), calculating the determinant of the posterior information matrix in its square root form (denoted as $R$), and using the Augmented Matrix Determinant Lemma (rAMDL) \cite{Kopitkov17ijrr, Kopitkov19ijrr}. In essence, rAMDL utilizes the well known Matrix Determinant Lemma, combined with some clever calculation re-use, to efficiently evaluate the determinant of the posterior information matrix. It requires a one-time calculation of some specific covariance entries, and its general form is as follows:
\begin{equation} \label{eq: rAMDLgeneral}
	| \Lambda+A^T A | =  | \Lambda | \cdot| \Delta | \cdot  \left \vert (A_\text{new})^T\cdot \Delta^{-1} \cdot A_\text{new} \right \vert,
\end{equation} 
where $\Delta=I_m+A_\text{old} \cdot \Sigma \cdot (A_\text{old})^T$, $\Sigma$ is the prior covariance matrix, and $A_\text{old}$ and $A_\text{new}$ are the blocks of the Jacobian matrix $A$, with respect to states at planning time (old) and states added by future actions (new), i.e., $A=[A_{old}, A_{new}]$\\
Applying measurement partitioning to the baseline method is not efficient: the baseline cost is $O(n^3)$, where $n$ is the dimension of $\Lambda$. Since $n$ remains the same given a partition, the overall calculation would be worse than $O(n^3)$. 

We choose to apply measurement partitioning to rAMDL and not $R$ for two reasons: Firstly, it has been shown that rAMDL is faster than $R$ within planning \cite{Kopitkov17ijrr, Kopitkov19ijrr}. Secondly, applying measurement partitioning in $R$ does not make sense -  although calculating the determinants in $R$ is $O(n)$, most of the computational burden lies in the update of the posterior $R$ matrix. This update involves a QR decomposition and by partitioning we split it into two separate QR decompositions that on average, are less efficient than the original one. We are then left with the most efficient method, which is rAMDL.

\subsubsection*{Computational Complexity} \label{sssection: GaussianCC}
One main insight, is that the cost of calculating the determinant using rAMDL (not including the one-time calculation which is common across all actions), is $O(m^3)$, where $m$ is the dimension of $A$. In a typical high-dimensional scenario, $m \ll n$, and by partitioning the measurements we reduce the dimension of $A$ even further so it becomes even more efficient. 

Applying \eqref{eq: rAMDLgeneral} to the determinants required for \eqref{eq: RewardLBGaussian} and \eqref{eq: RewardUBGaussian} results in:
\begin{equation*}
	\vert\Lambda^{\textsc{Aug}-}_{k}+(A^s_{i})^T A^s_{i} \vert =  \vert \Lambda_{k}  \vert \cdot \vert \Delta_k^s \vert \cdot  \left \vert (A^s_\text{new})^T (\Delta^s_{k})^{-1}  A^s_\text{new} \right \vert,
\end{equation*}
where $\Delta_k^s=I_m+A^s_\text{old} \cdot \Sigma_k \cdot (A^s_\text{old})^T$, and $A^s_\text{old}$ and $A^s_\text{new}$ are the old and new blocks of the matrix $A_s$, i.e., $A^s=[A^s_{old}, A^s_{new}]$. In a similar way we apply rAMDL to $ \vert\Lambda^{\textsc{Aug}-}_{k}+(A^{\bar{s}}_{i})^T A^{\bar{s}}_{i}\vert$. 

The most efficient partitioning, for the first depth, is when $s=\bar{s}$, so the resulting matrices are of rank $\frac{m}{2}$. The cost of evaluating each of the determinants for the partitioned Jacobians is $O(\frac{m^3}{8})$. Compared to rAMDL, the cost is reduced from $O(m^3)$ to $O(\frac{m^3}{4})$ by using the bounds.     

\section{RESULTS}
In this section, we start off by demonstrating the properties of the bounds as presented in \ref{subsec: Analysis}, for a typical active SLAM scenario. We then show that compared to the other SOTA method our approach is faster while achieving similar reward, both in a simulated scenario and in real-world experiment. 
\subsection{Setup}
The setting considered is a planar high-dimensional SLAM, where the state includes past poses, current poses, and landmarks. The belief over the state is normally distributed and the reward function is differential Entropy. The action space is continuous, and each pose and landmark has 3 and 2 degrees of freedom, respectively. Thus, for a planning session at time instant $k$ and planning horizon $\ell$,  the joint state has the dimensionality of  $3(k+\ell)+2L$, where $L$ is the number of mapped landmarks. As mentioned in section \ref{sssec: Estimation}, in our simulations we generate  Maximum-Likelihood observations, although this is not a limitation of our approach. In all scenarios, the agent uses its sensor to map an unknown environment. In the simulation we use a range and bearing sensor, the real experiments uses cameras and runs visual SLAM. After an initial mapping procedure, a prior belief containing poses and landmarks is formed. The agent then starts a planning session, where the objective is to reach a certain goal while minimizing the uncertainty over the high-dimensional state at the last planning step. A typical prior belief is illustrated in Figure \ref{fig: Uncertainty}.  

GTSAM 4.1.0 and Python 3.9.7 were used for the simulated scenarios, running on Ubuntu 18.04 and AMD Ryzen 7 3700X 8-Core Processor. For the real-world experiment GTSAM 4.1.0 and Python 3.8 were used, running on Ubuntu 20.04 and Intel(R) Xeon(R) CPU E5-1620.

\begin{figure} [!h] 
	\includegraphics[width=\columnwidth]{./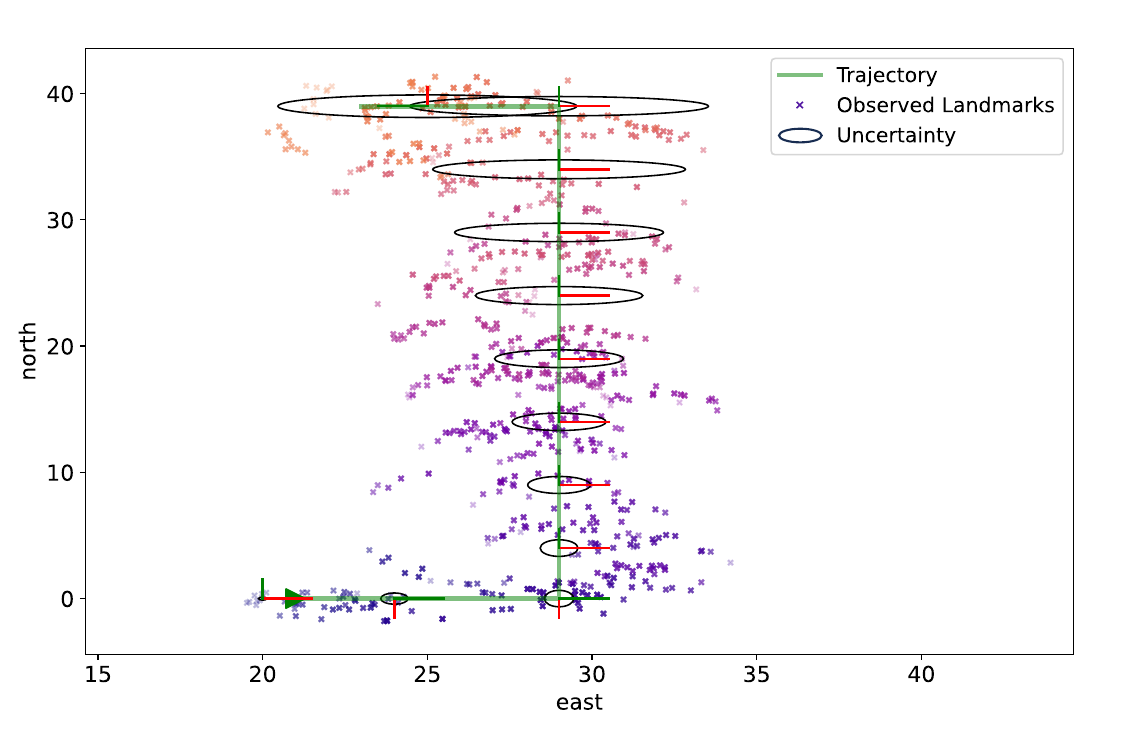}
	\scriptsize
	\caption{\scriptsize Illustration of the prior belief showing a subset of the state landmarks. The uncertainty associated with the joint covariance matrix is shown for every fifth pose. The time of each observed landmark is encoded in color - from a dark one at the beginning to light at the end. }
	\normalsize
	\vspace{6pt}
	\label{fig: Uncertainty}
\end{figure} 

\subsection{Bounds Analysis}
\subsubsection*{Inter Depth Properties}
Figure \ref{fig: BoundsConverge} shows the properties of each of the bounds, as individual measurements are moved from one partition to the other, specifically, as $Z^s \rightarrow Z$ and $Z^{\bar s} \rightarrow \emptyset$. We can see that both bounds converge to the actual expected Entropy, the upper bounds does so monotonically while the lower bounds does not, as expected.
\begin{figure} [!h] 
	\includegraphics[width=\columnwidth]{./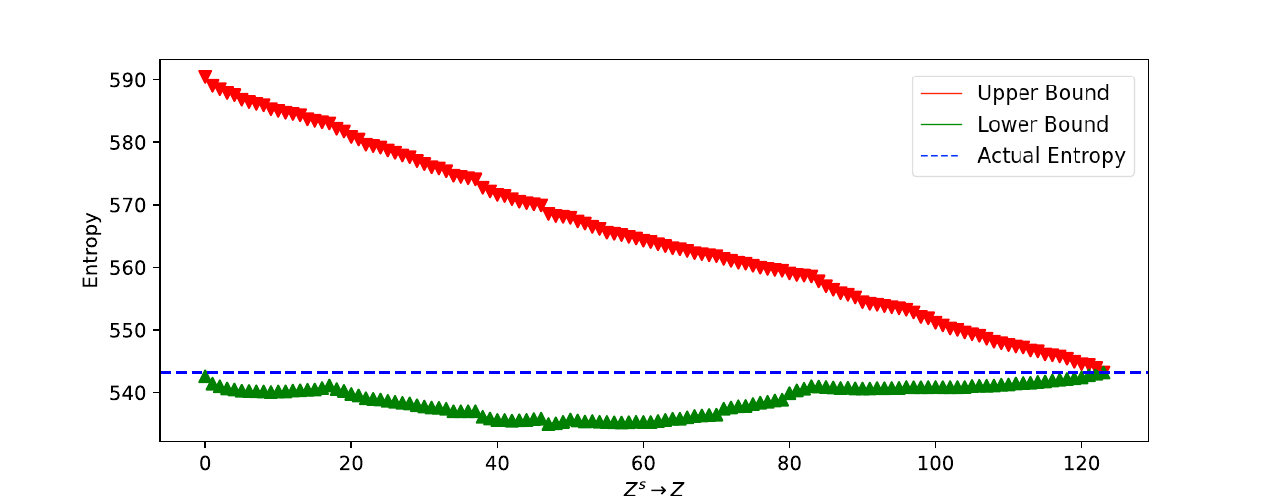}
	\scriptsize
	\caption{\scriptsize Empirical demonstration of the bounds' behavior, for $Z^s \rightarrow Z$.}
	\normalsize
	\vspace{6pt}
	\label{fig: BoundsConverge}
\end{figure} 
As proposed in Section \ref{subsec: Analysis}, the lower bound does not converge monotonically because it is a function of the mutual information between the measurement partitions. Since the assignment to partitions is random, so is the value of the mutual information between them.

\subsubsection*{Intra Depth Properties}
We show the properties of the bounds, as we go deeper in the partition tree. Figure \ref{fig: BoundsConvergeDepth} shows, as expected, that the lower we go down the partition tree, the efficiency of the bounds comes at the expanse of their tightness. We can see that both bounds monotonically converge to the real expected reward as we go up the partition tree.

\begin{figure} [!h] 
	\includegraphics[width=\columnwidth]{./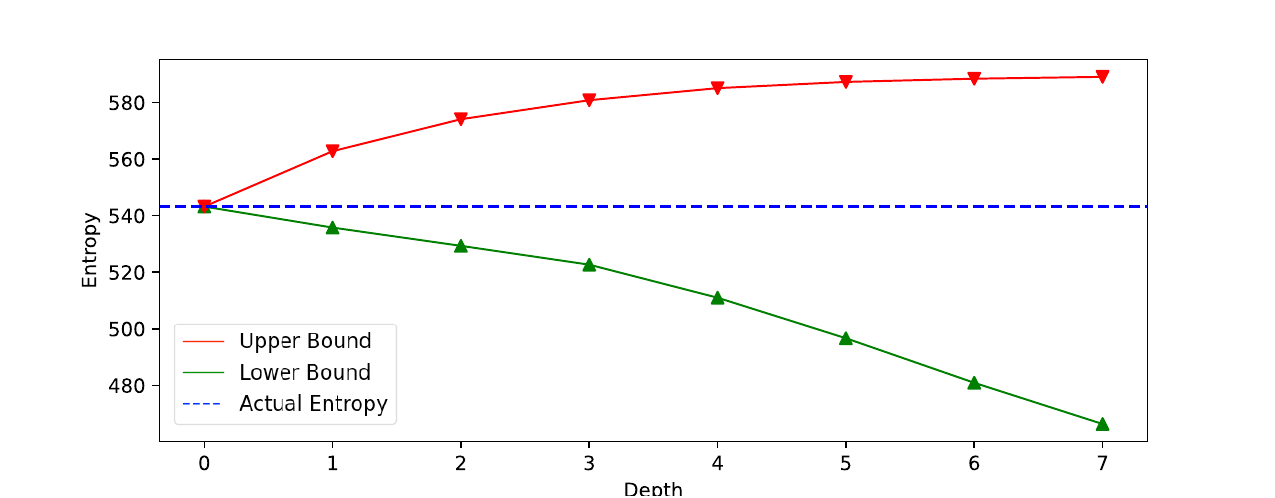}
	\scriptsize
	\caption{\scriptsize Empirical demonstration of the bounds' behavior, as $d \rightarrow \log_2 m$.}
	\normalsize
	\vspace{6pt}
	\label{fig: BoundsConvergeDepth}
\end{figure} 

\subsubsection*{Correlation and Tightness}
Here we demonstrate the usefulness of the bounds in the general case, by presenting the expected reward bounds, for a set of myopic random actions. Looking at figure \ref{fig: RandomActionBounds}, we can see that although some bounds overlap, we can safely prune around 60 percent of the actions which are sub-optimal.
 
 \begin{figure} [!h] 
	\includegraphics[width=\columnwidth]{./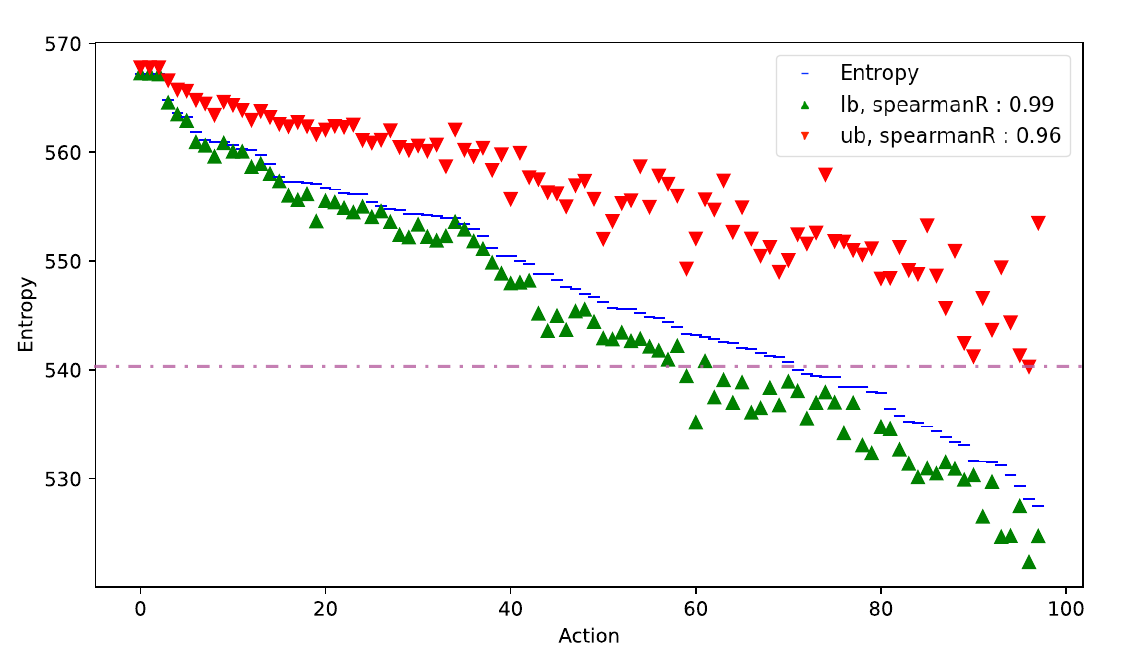}
	\scriptsize
	\caption{\scriptsize Expected Entropy bounds for each sampled action. All actions whose lower bound is above the purple dashed line are sub-optimal and can be pruned safely.}
	\normalsize
	\vspace{6pt}
	\label{fig: RandomActionBounds}
\end{figure} 
 
 We can observe that the lower bound is very strongly correlated to the actual reward while the upper bound shows slightly weaker correlation to the reward. Again, a possible explanation for this behavior is the random assignment of the individual observations to their corresponding set. Since the lower bound accounts for both sets, it is only a function of their mutual information, while the upper bound is a function of this random assignment. 
  
 Next we compare two specific actions, taken from the previous random action set, by showing their respective hierarchical bounds. Figure \ref{fig: BoundsDepth} shows, that for actions that lead to rewards which are distinct, it is possible to select the optimal action based on a very efficient calculation of the bounds. Specifically, we compare an action which explores a new part of the map, to another which re-observes previously seen landmarks.
 
  \begin{figure} [!h] 
	\includegraphics[width=\columnwidth]{./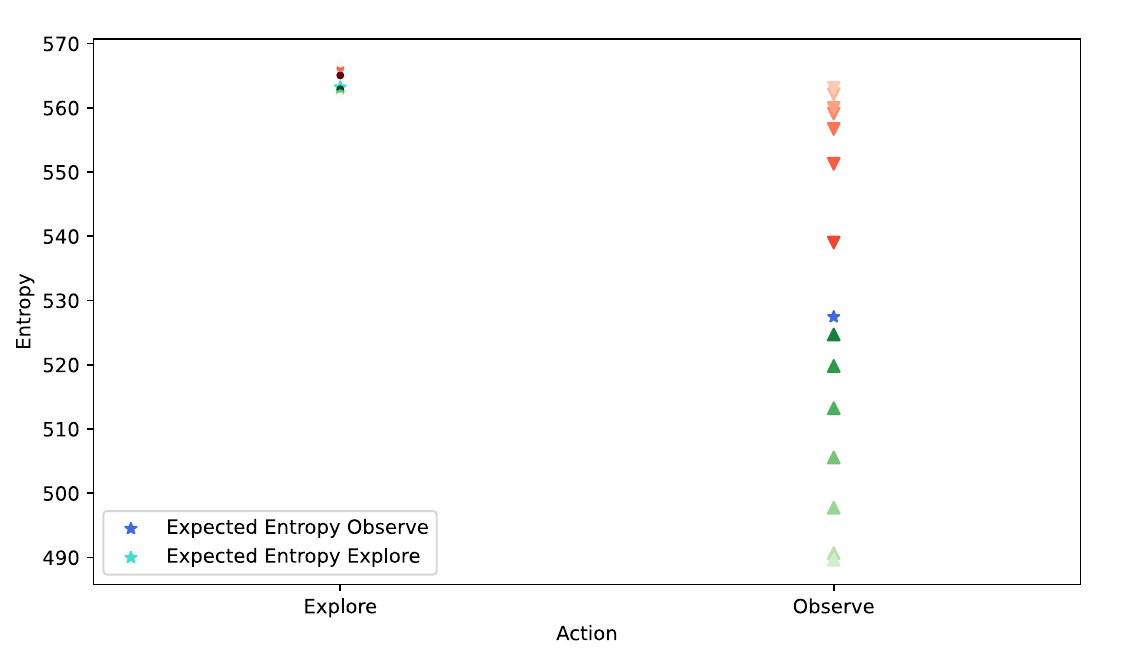}
	\scriptsize
	\caption{\scriptsize Hierarchical bounds for two distinct actions "Explore" and "Observe".  The lighter the color, the more efficient the bounds are.}
	\normalsize
	\vspace{6pt}
	\label{fig: BoundsDepth}
\end{figure} 

\subsection{Planning Using Bounds}
\subsubsection{Simulation} \label{sssection: Simulation}
We now demonstrate the use of the bounds in a simulated SLAM scenario using the GTSAM library and a \emph{Probabilistic Roadmap} (PRM) generator \cite{Kavraki96tra}, the goal is to demonstrate a typical use-case in active SLAM setting.  We start the planning session with a prior belief as in figure \ref{fig: Uncertainty}. We then use the PRM to randomly generate a set of candidate paths, shown in figure \ref{fig: PRM}. As seen in figure \ref{fig: IllustrationScenario1}, the paths start at the green dot and the goal shown in red, each path consisting of around 12 actions. The PRM samples are drawn from a uniform continuous distribution. The actions of a given path follow the trajectory dictated by the PRM. For each path from a set of the shortest ones, we evaluate the expected reward at the goal, and its bounds, then choose the optimal one. We emphasize that our approach is applicable also to other methods for generating candidate action sequences and not limited to the above-described specific method.

  \begin{figure} [!h] 
	\includegraphics[width=\columnwidth]{./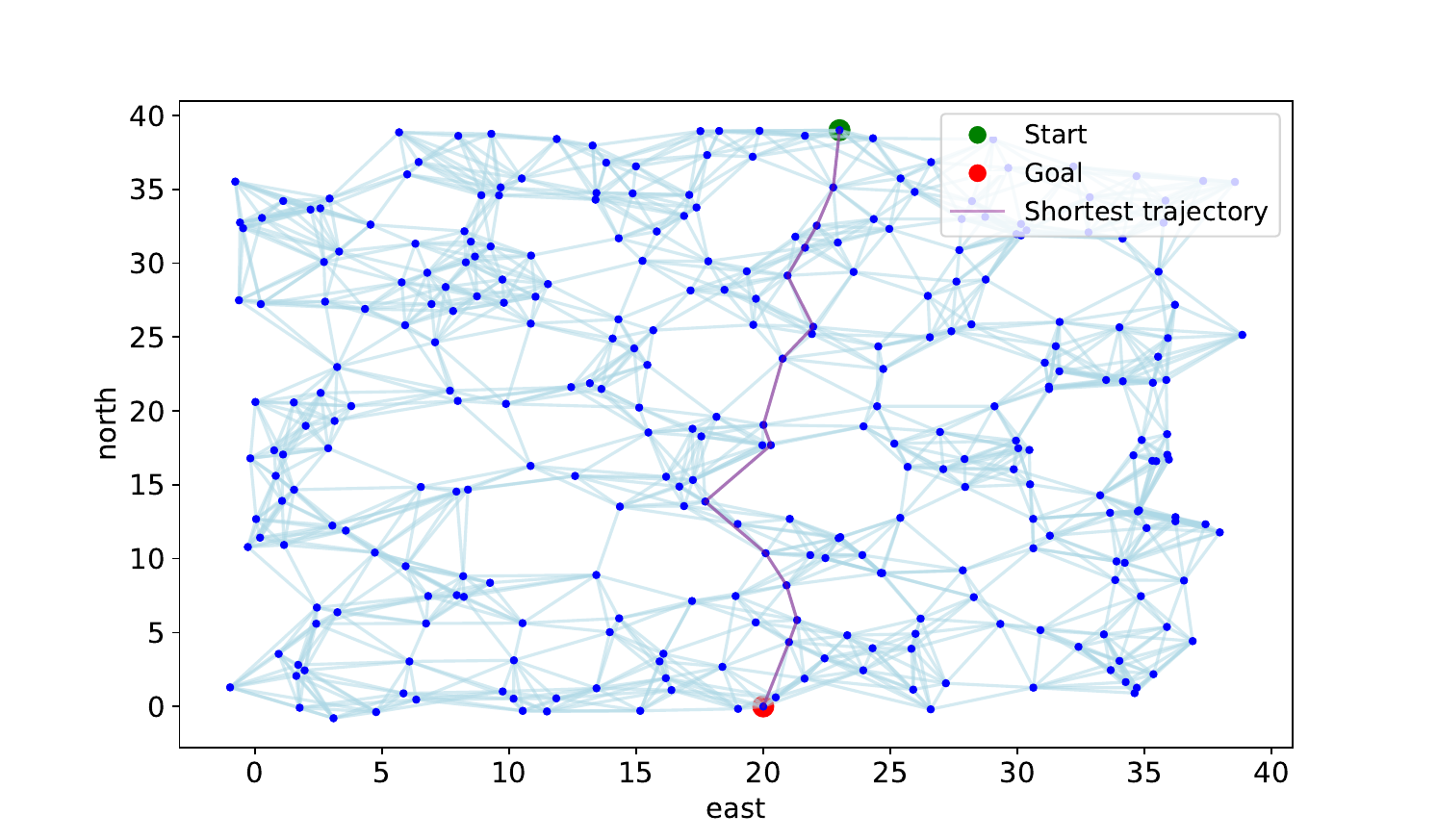}
	\scriptsize
	\caption{\scriptsize PRM generator is used to construct a viable path from the agent's pose at the start of the planning session to the desired goal. We then select a number of the shortest ones.}
	\normalsize
	\vspace{6pt}
	\label{fig: PRM}
\end{figure} 

\begin{figure} [!h] 
	\includegraphics[width=\columnwidth]{./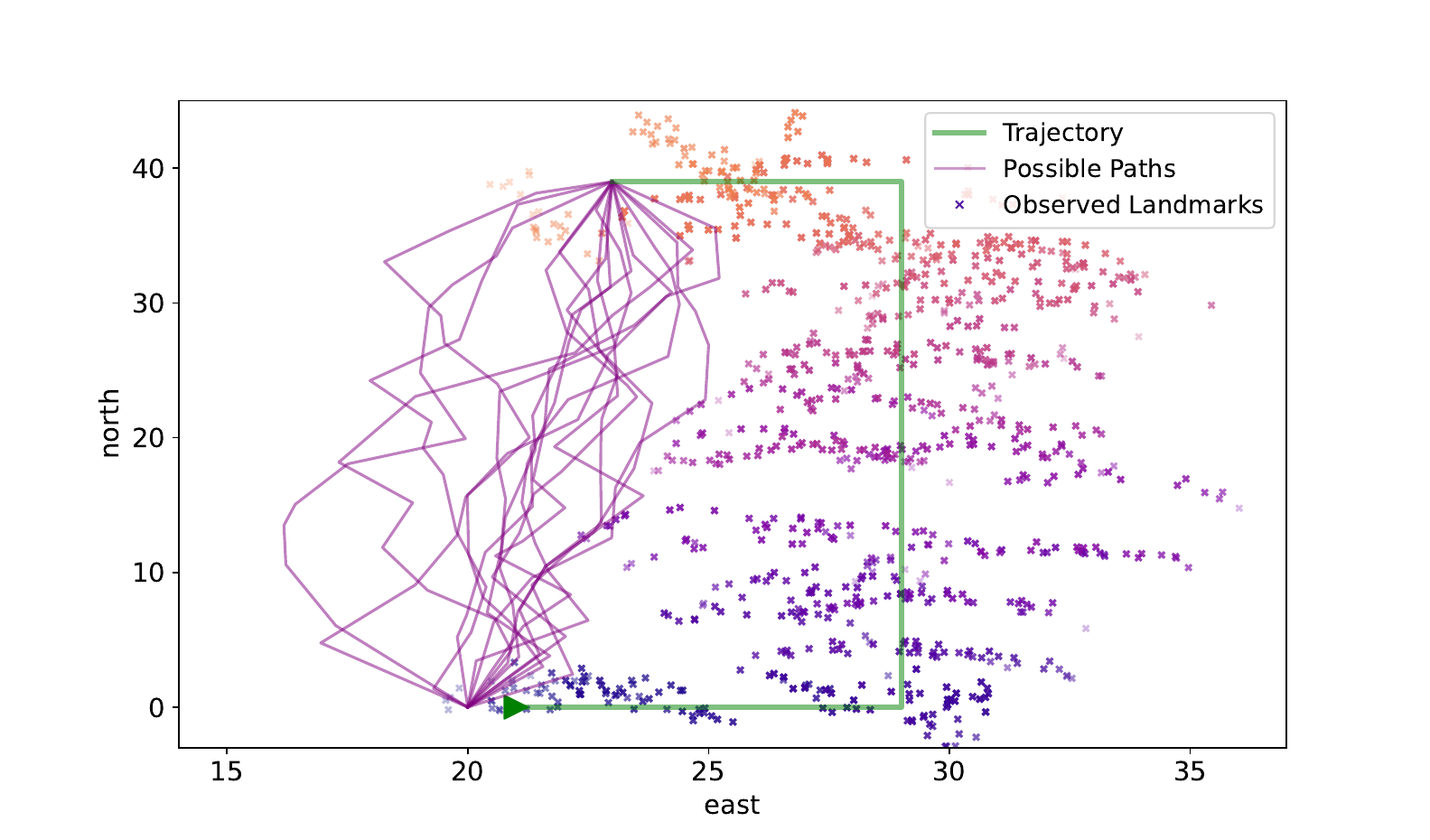}
	\scriptsize
	\caption{\scriptsize Illustration of the simulated scenario, showing a subset of the landmarks and paths.}
	\normalsize
	\vspace{6pt}
	\label{fig: IllustrationScenario1}
\end{figure} 

We start by comparing our proposed method using Measurement Partitioning (MP) to rAMDL. Since our method uses rAMDL to compute the determinants required for the bounds, many of the calculations are common and  are not accounted for in the comparison. When compared to iSAM2, these will be accounted for.  We compared the performance for three different scenarios, while two of them utilize re-planning. In a re-planning scenario, the agent chooses an optimal path, takes the first action, re-draws paths to the goal from the new pose. The number of re-planning steps is 5, since the goal is fixed, each consecutive trajectory is shorter than its predecessor, see Fig \ref{fig: RPScenario} for details.  

From Table \ref{tab: MPVsrAMDL} we can see that the speed-up increases with the number of factors. That is expected since the speed-up is correlated with the number of Jacobian rows. Since most of the prior factors relate to landmarks, a higher number of factors results in a higher number of observations. We can also see that the speed-up is approaching the theoretical value of $O(\frac{m}{4})$, shown in Section \ref{sssection: GaussianCC}. This is because the higher the number of observations, the smaller the impact of the one-time overhead on the total planning time. 
Similar to all of the following scenarios, we use the bound to prune sub-optimal paths; when the bounds overlap we choose the path that has the lowest lower bound and bound the possible loss we might incur. For example, for the  third scenario in table \ref{tab: MPVsrAMDL}, the bound on the expected loss was 109.578 and the optimal expected reward was 5215.999. The ratio of loss divided by the optimal reward was 0.021, such that in the worst case, the trajectory chosen would set us off from the optimal one by two percent. Although unknowable in planning, in this specific case, as in many others, the lowest lower bound  indeed corresponded the optimal expected reward, and the loss in practice was equal to zero. 
 \begin{figure} [!h] 
	\includegraphics[width=\columnwidth]{./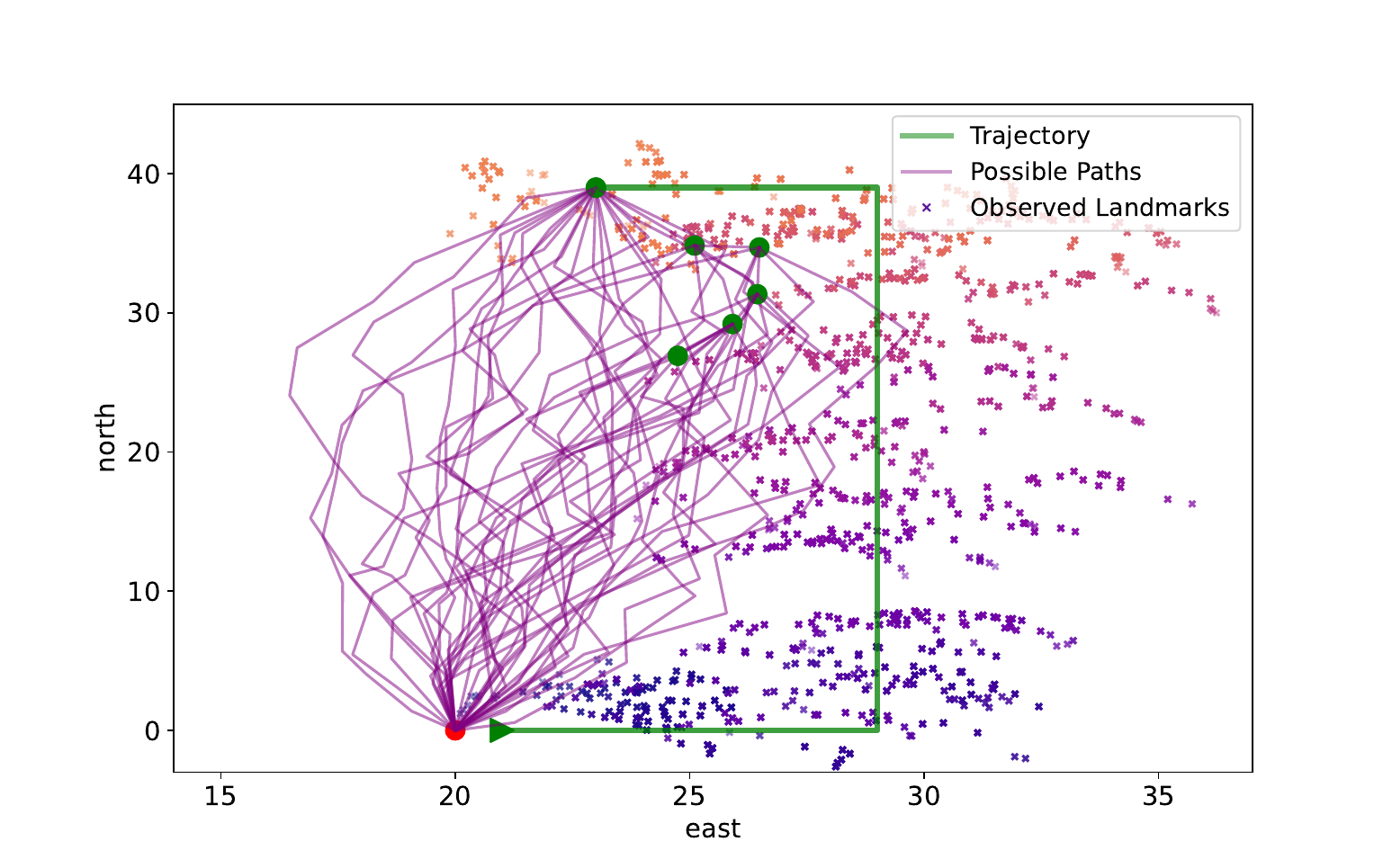}
	\scriptsize
	\caption{\scriptsize Re-planning scenario, each green dot represents the new pose for planning, the goal is represented by a red dot.}
	\normalsize
	\vspace{6pt}
	\label{fig: RPScenario}
\end{figure} 
    
\begin{table}[h!]
  \begin{center}
     
    \begin{tabular}{c | c | c | c | c } 
    
      \# Paths & \# Factors & RP & rAMDL &MP (ours)\\
      \hline
      100 & 2956 & No & $\boldmath{11.521 \pm 0.537}$ & $\boldsymbol{6.888 \pm 0.155}$ \\  
      100 & 2956 & Yes & $\boldmath{24.636 \pm 1.381}$ & $\boldsymbol{11.758 \pm 0.372}$ \\
      100 & 5904 & Yes & $\boldmath{84.376 \pm 14.458}$ & $\boldsymbol{32.069 \pm 4.913}$
     
    \end{tabular}
    \caption{    \label{tab: MPVsrAMDL}Total planning time in seconds (lower is better), not including all common one-time calculations. For all scenarios the same best action was calculated. Therefore, the objective function value, which corresponds in our case to the expected Entropy at the end of the planning horizon, was identical for all methods.}
  \end{center}
\end{table}

Next we compare our method to iSAM2 and rAMDL, including all one-time calculations, in two different scenarios. The first scenario uses a similar prior belief to the previous ones, while the second uses a different prior that includes obstacles. We use iSAM2 \cite{Kaess12ijrr} to incrementally update the square root of the information matrix, then use the updated $R$ matrix to calculate the reward as in section \ref{sssection: DetMethods}. Figure \ref{fig: BoundsSim} shows the bounds for the first simulated scenario, 2500 paths were evaluated without re-planning, where we can see  that about 85 percent of the sub-optimal paths can be safely pruned using the bounds. Table \ref{tab:SimMPVsRest} shows the total planning time for each method in the first scenario. We show separately the time it takes to recover the required entries from the prior covariance matrix in table \ref{tab:CovRecovery}: \emph{worst-case} is the time it takes to recover the full matrix, while the \emph{actual} is the time it takes to recover only the entries required for the states involved in the measurements for the set of all trajectories. Generally, the earlier in the robot's past trajectory the involved states are, the more of the covariance matrix entries we would need to recover, which is time consuming. However, in a scenario such as that, the time it would take to update the posterior R matrix needed for iSAM2 would be significant as well.  
Including the covariance recover, we can observe that our  method is about 45 percent faster than rAMDL, and about 65 percent faster than iSAM2.  

 \begin{figure} [!h] 
	\includegraphics[width=\columnwidth]{./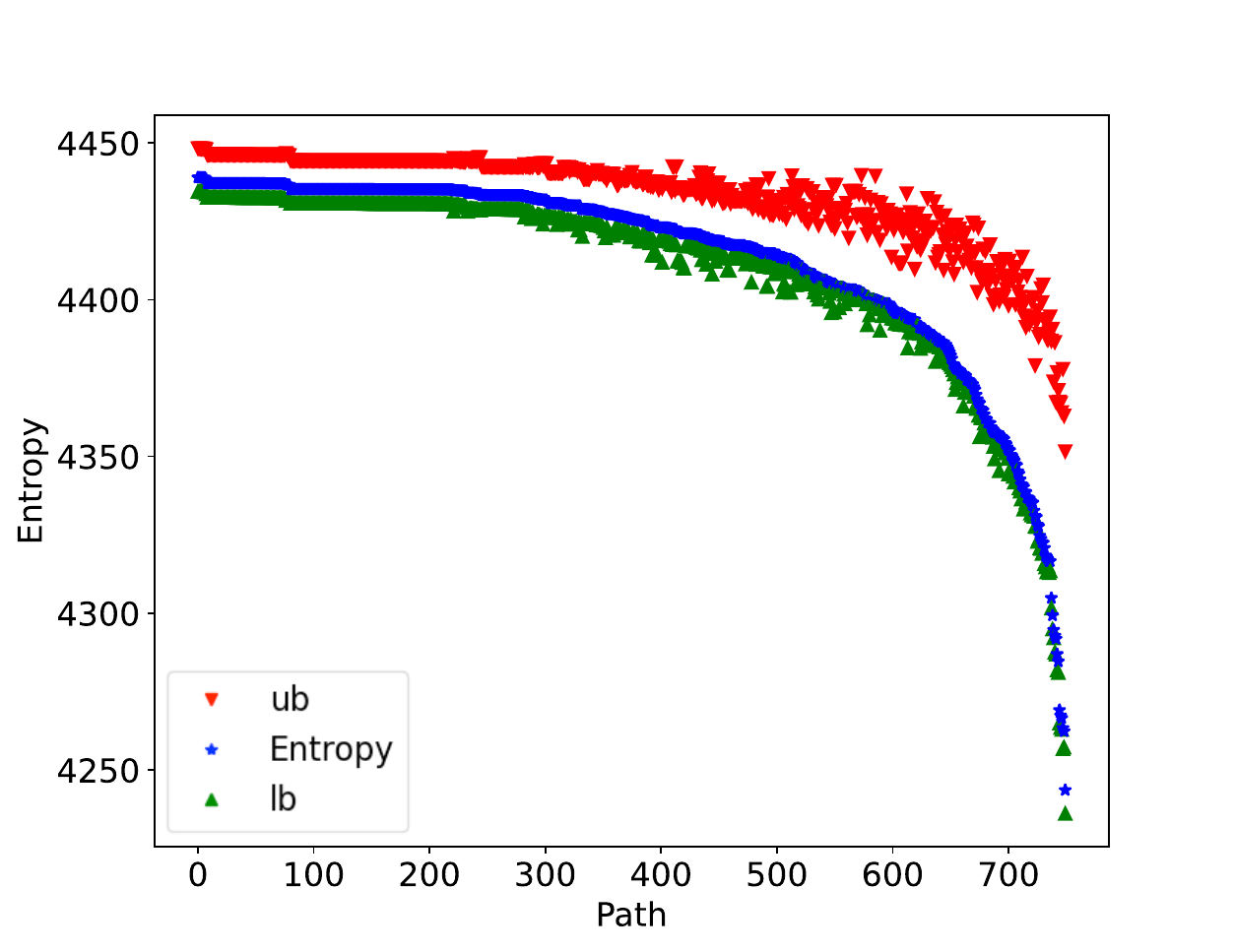}
	\scriptsize
	\caption{\scriptsize Bounds for the first simulated scenario, showing a subset of the lowest Entropy paths.}
	\normalsize
	\vspace{6pt}
	\label{fig: BoundsSim}
\end{figure} 

\begin{table}[!htb]
\vspace{20pt}
      
      \centering
\begin{tabular}[t]{c | c } 
Method & time [sec] \\
\hline
MP (ours) & $\boldsymbol{323.445 \pm 0.175}$  \\  
       rAMDL & ${590.584 \pm 0.463}$  \\
       iSAM2 & ${728.854 \pm 0.348}$ 

\end{tabular}
\caption{    \label{tab:SimMPVsRest}Total planning time in seconds (lower is better) for the first scenario, number of factors in the prior graph is 12185, number of paths evaluated is 2500. For all scenarios the same best action was calculated. Therefore, the objective function value was identical for all methods (4243.47).}
 \label{tab:SimMPVsRest}

\vspace{20pt}    
\begin{tabular}[t]{c|c} 

worst-case [sec]& actual [sec]\\
\hline
21.05 & 6.055 \\
\end{tabular}
\caption{Covariance recovery time.}
\label{tab:CovRecovery}
\end{table}

Introducing a second scenario, illustrated in Fig \ref{fig: IllustrationScenario2}, we compare the three methods again. The basic setting is similar, the only differences are the prior mapping, start and end points, and obstacles that were added to the map. The agent infers the location of the obstacles during the prior mapping session.  

 \begin{figure} [!h] 
	\includegraphics[width=\columnwidth]{./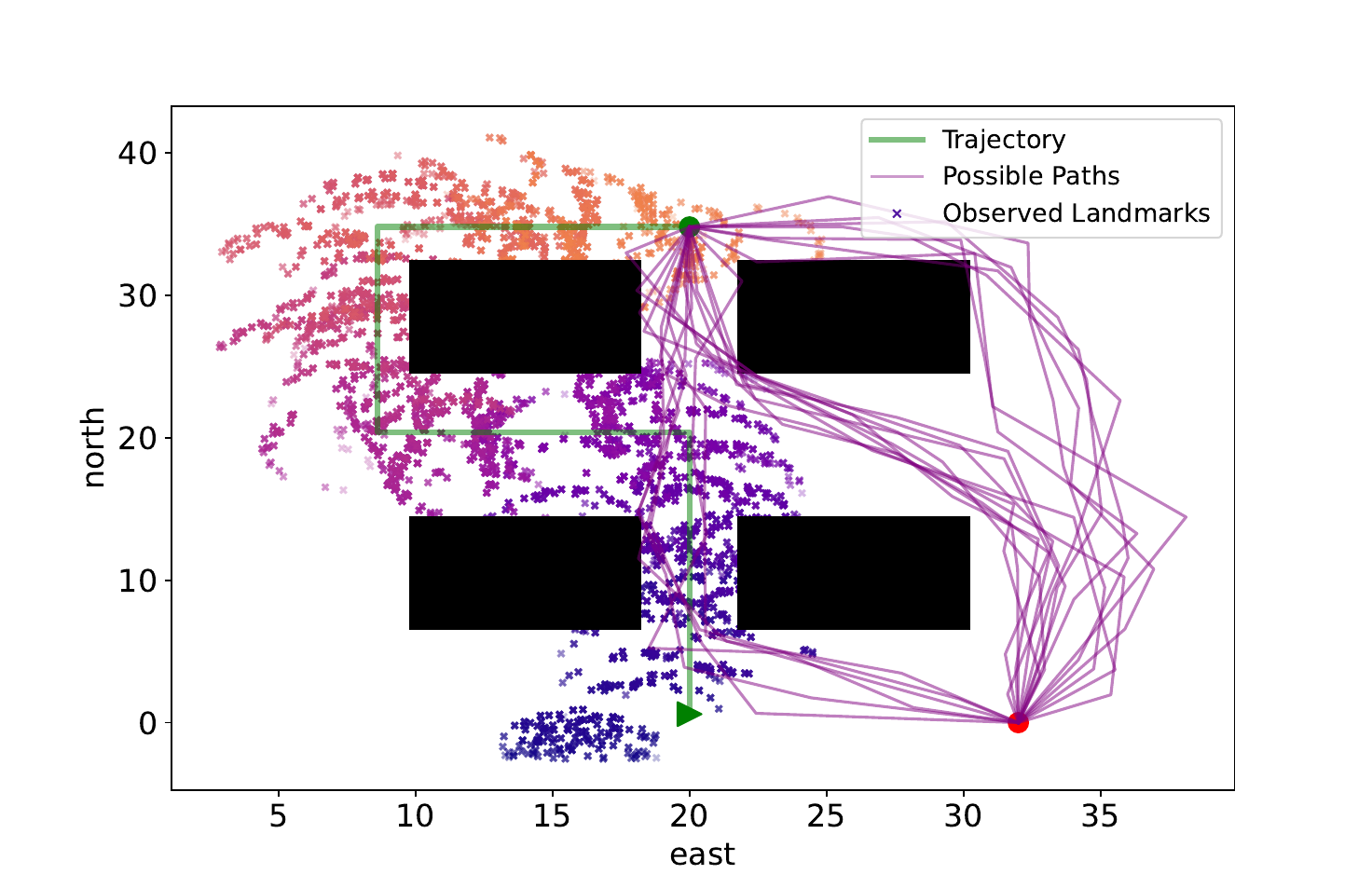}
	\scriptsize
	\caption{\scriptsize Illustration of the second simulated scenario, showing a subset of the landmarks and paths, obstacles shown in black. Each trajectory starts from the green dot and ends in the red one.}
	\normalsize
	\vspace{6pt}
	\label{fig: IllustrationScenario2}
\end{figure} 

Just like the previous scenario, the bounds are presented in Fig \ref{fig: BoundsSim2}, and Table \ref{tab:SimMPVsRest2} summarizes the total planning time for each method.
This scenario demonstrates a setting where there aren't many observations during planning. Fig.~\ref{fig: IllustrationScenario2} shows that most of the planning trajectories do not pass through previously mapped areas, hence, not generating future measurements, which explains what we see in Table \ref{tab:SimMPVsRest2}. Total planning time is shorter for all methods, which is caused by the small number of future measurements. We can see that our method is still the fastest without incurring any loss. However, given fewer future measurements, and as a consequence, the smaller number of Jacobian rows, the difference in planning time is not as drastic as in the previous scenario.  

 \begin{figure} [!h] 
	\includegraphics[width=\columnwidth]{./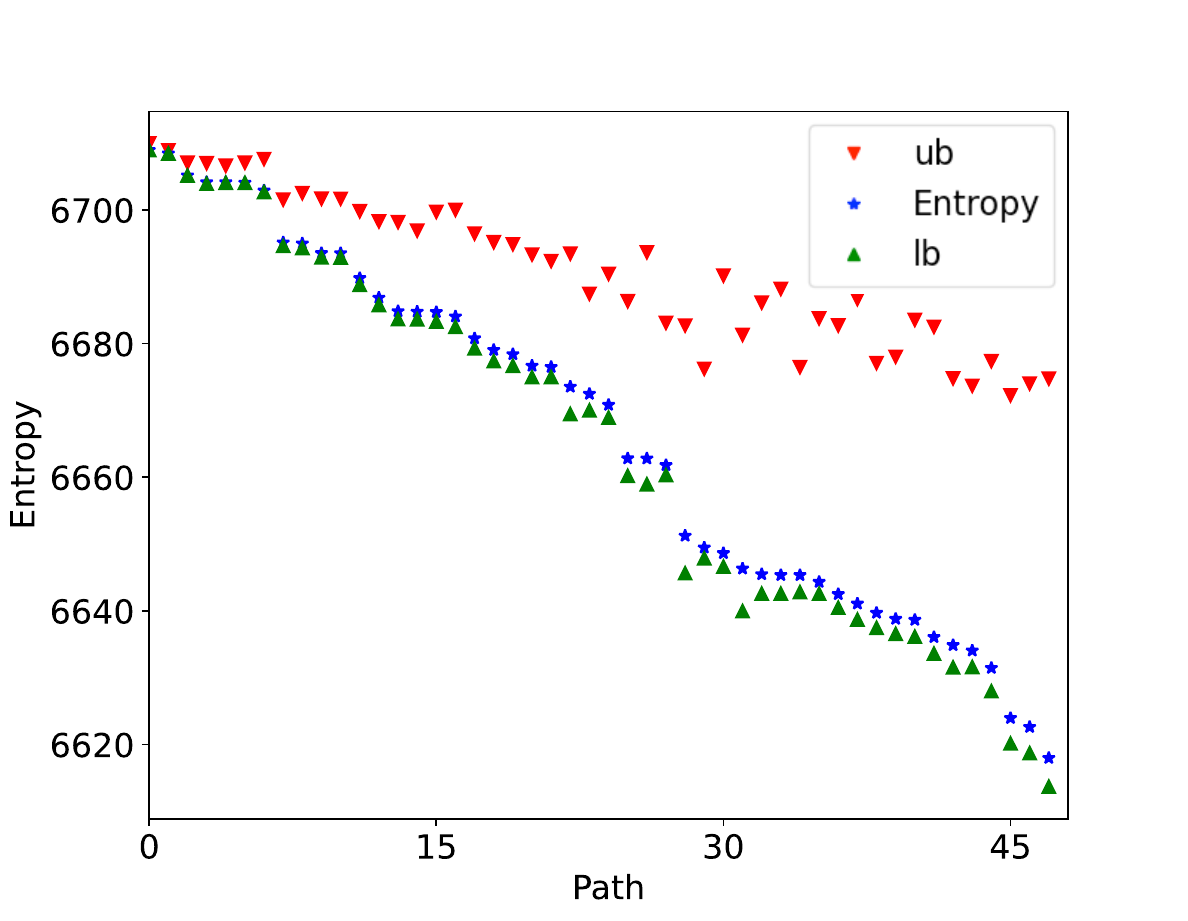}
	\scriptsize
	\caption{\scriptsize Bounds for second simulated scenario, showing a subset of the lowest Entropy paths.}
	\normalsize
	\vspace{6pt}
	\label{fig: BoundsSim2}
\end{figure} 

\begin{table}[!htb]
\vspace{20pt}
      
      \centering
\begin{tabular}[t]{c | c } 
Method & time [sec] \\
\hline
MP (ours) & $\boldsymbol{41.404 \pm  0.024}$  \\  
       rAMDL & ${50.186 \pm  0.037}$  \\
       iSAM2 & ${62.705 \pm 0.038}$ 

\end{tabular}
\caption{    \label{tab:SimMPVsRest2}Total planning time in seconds (lower is better) for the second scenario, number of factors in the prior graph is 17394, number of paths evaluated is 2500. For all scenarios the same best action was calculated. Therefore, the objective function value was identical for all methods (6617.20). }\label{tab:SimMPVsRest2}
\end{table}

\subsubsection{Sensitivity Study}
In this section we show the sensitivity of the different methods to the density of the prior information matrix. We use the same scenario from before, while adjusting the density of the prior information matrix by pruning factors from the prior factor graph. We use the number of landmarks connected to a certain pose (via a measurement factor) as a metric of density. This metric is then averaged across 300 evaluated paths. The results are presented in figure \ref{fig: SensitivityPriorDenseTotal}.

 \begin{figure} [!h] 
	\includegraphics[width=\columnwidth]{./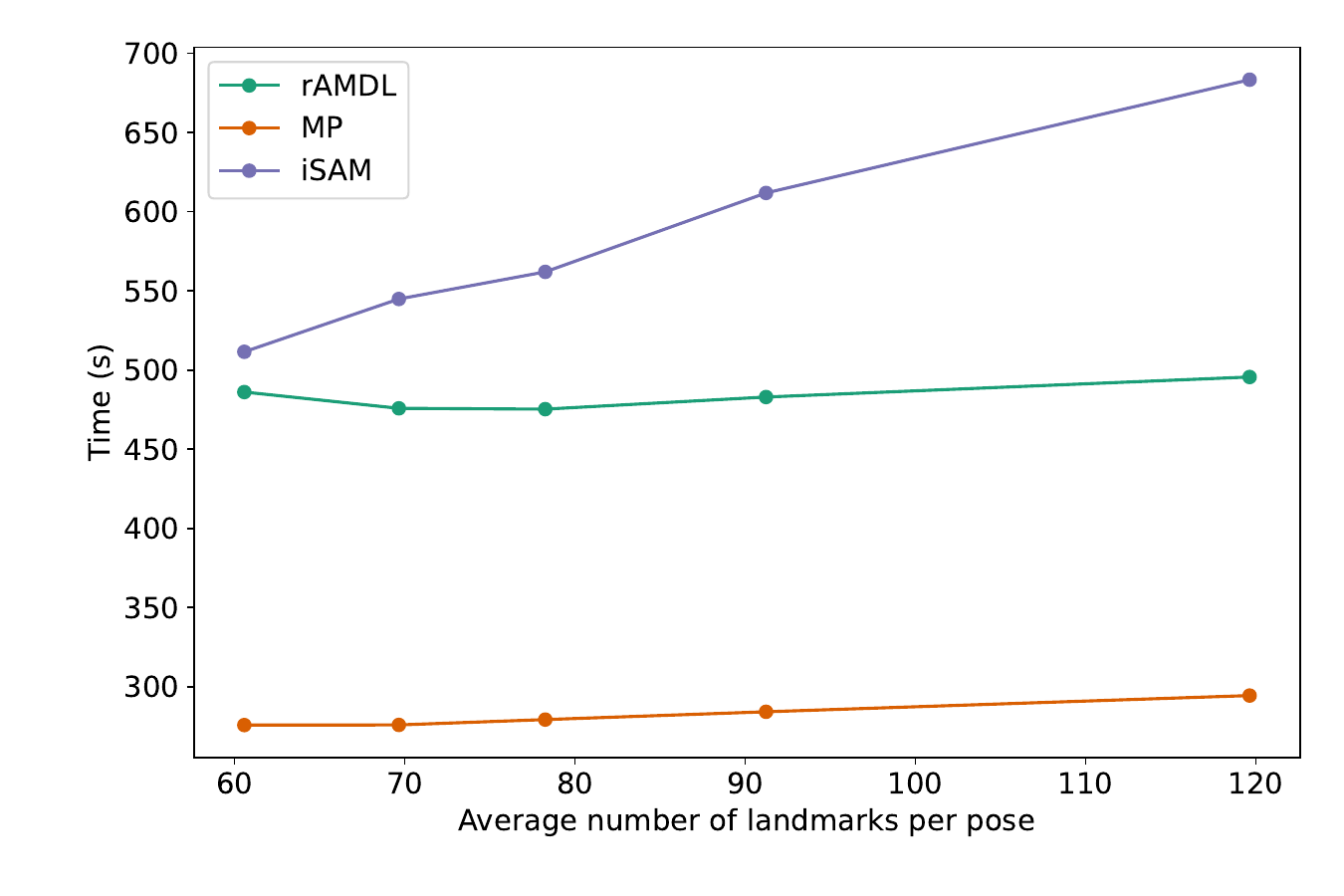}
	
	\caption{\scriptsize Comparison of total planning-time as a function of prior density.}
	\normalsize
	\vspace{6pt}
	\label{fig: SensitivityPriorDenseTotal}
\end{figure} 
We can observe that MP and rAMDL are largely un-affected by the change in prior density. This is expected since the complexity of both of them is a function of the Jacobian as explained in section \ref{sssection: GaussianCC}.
On the other hand, iSAM2 is much more sensitive to the prior density as its planning-time grows rapidly compared to the other methods. In order to further dissect the difference between the methods, we compare the iSAM2 planning time to the one-time calculation associated with MP and rAMDL shown in figure \ref{fig: SensitivityPriorDense}. The main components of the one-time are the recovery of the prior information and prior covariance matrices. 
 Theoretically, both the one-time and iSAM2 should be affected by the change in density. In practice, we can see that the one-time calculation is affected by the change in density, but when compared to iSAM2 it effectively becomes negligible. The reason for this gap is that the one-time calculation, as its name suggests, is done only once per a given prior belief. Conversely, the components of the iSAM2 calculation that are affected by the change in density have to be evaluated from scratch for each candidate path. When comparing the total planning-time of a pool of candidate paths, the growing cost associated with the one-time calculation as the density increases, becomes negligible. 
 \begin{figure} [!h] 
	\includegraphics[width=\columnwidth]{./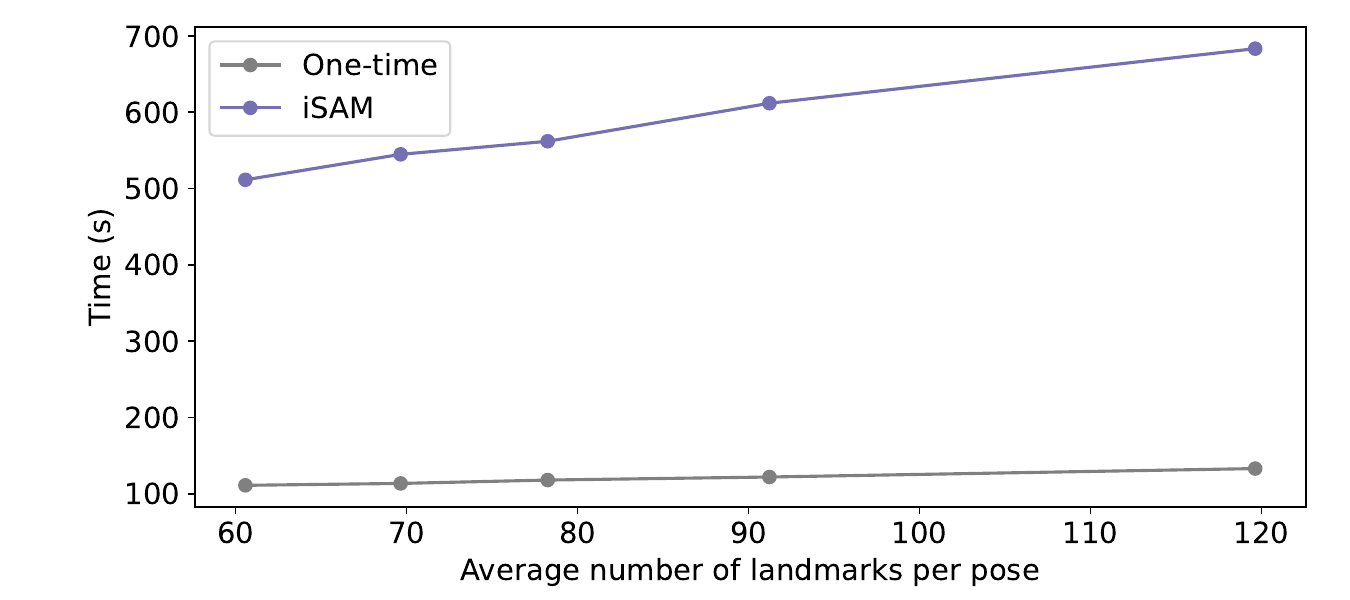}
	\scriptsize
	\caption{\scriptsize Comparison of planning-time as a function of prior density. iSAM2 planning-time grows as a function of the prior density, while the one-time calculation needed for MP stays flat.}
	\normalsize
	\vspace{6pt}
	\label{fig: SensitivityPriorDense}
\end{figure}

\subsubsection{Experiment}
Next, we demonstrate the use of the bounds in a real-world experiment using a DJI Robomaster S1, equipped with a ZED camera (used in a monocular configuration). GTSAM is used for the backend while the frontend is handled by SuperGlue \cite{Sarlin20cvpr}. The planning session starts after a partial mapping of an indoor environment,  which is stored as a prior factor-graph. From its current pose, the robot is given a goal, it then uses the PRM to generate possible trajectories to its goal. For each method 500 paths were evaluated, each consists of 20 actions.

Figure \ref{fig: Robomaster} shows the hardware setup for the experiment, figure \ref{fig: SuperGlue} shows a typical feature matching on two consecutive-in-time images. Visual odometry is used as motion model factors, and projection factors are used as observation factors.
 \begin{figure} [!h] 
 \centering
	\includegraphics[scale=0.09]{./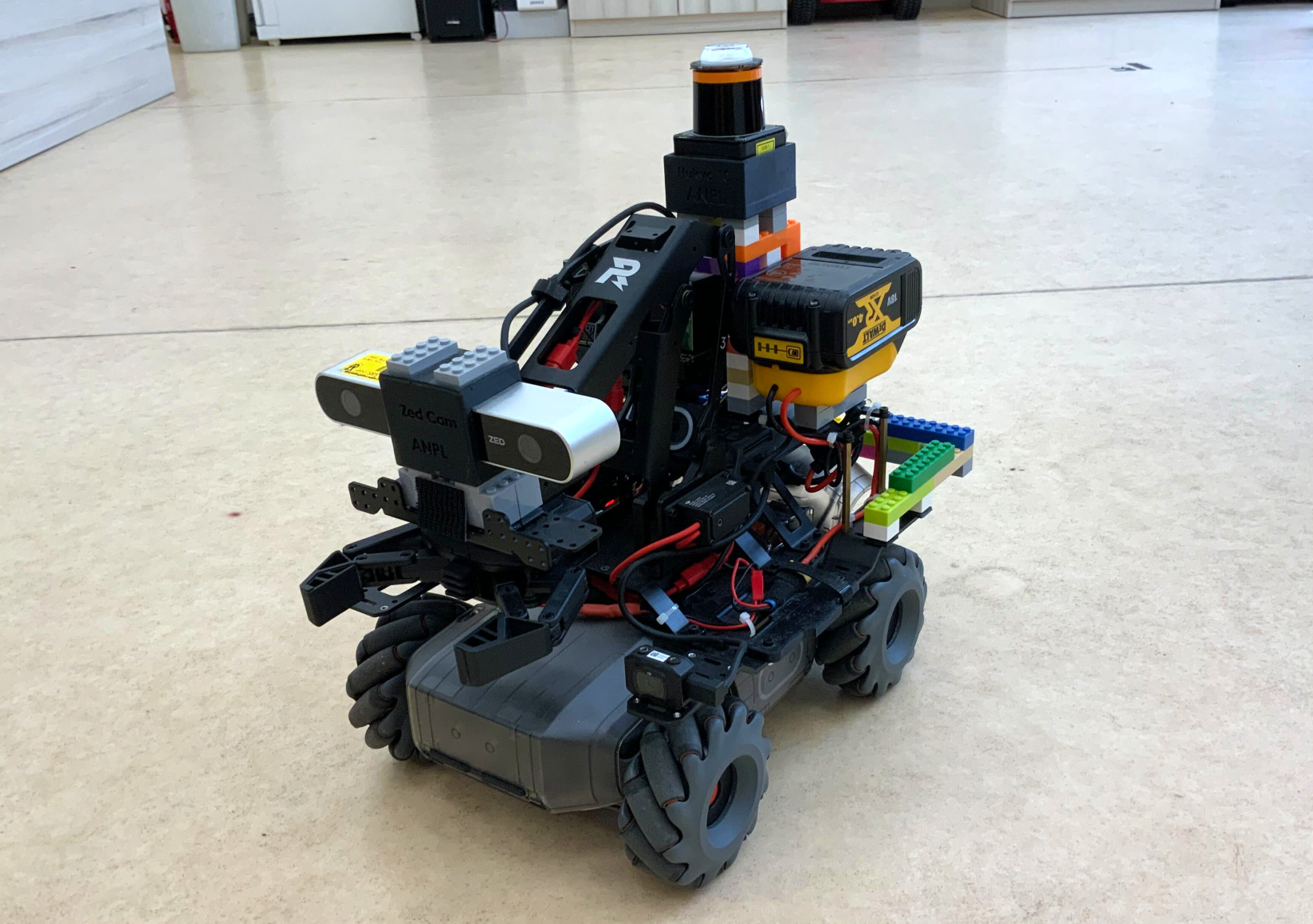}
	\scriptsize
	\caption{\scriptsize Robomaster S1 robot, equipped with a ZED camera. }
	\normalsize
	\vspace{6pt}
	\label{fig: Robomaster}
\end{figure} 

 \begin{figure} [!h] 
	\includegraphics[width=\columnwidth]{./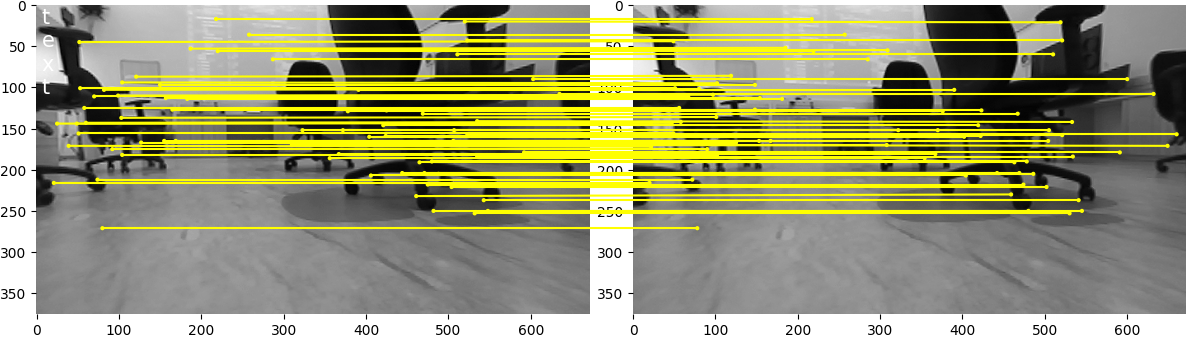}
	\scriptsize
	\caption{\scriptsize SuperGlue feature matching.}
	\normalsize
	\vspace{6pt}
	\label{fig: SuperGlue}
\end{figure}

Table \ref{tab: RealMPVsRest} summarizes the planning time for each of the methods and figure \ref{fig: ExperimentBounds} shows the expected reward bounds. 
We can see again that our method allows for a meaningful speed-up. About 70 percent of the paths can be pruned, while the lowest lower bound corresponds to the optimal expected reward.
The planning time of our method is about 30 percent faster than rAMDL, and about 70 percent faster than iSAM2. The larger difference between iSAM2 and the other two methods, compared to the same difference in section \ref{sssection: Simulation} can be explained by the sensitivity to the density of the prior shown in the previous section. The average number of landmarks per pose for the simulated scenario is 72.46, compared to 103.89 for the experiment. 

\begin{table}[h!]
  \begin{center}
    \begin{tabular}{ c | c  }     
       Method & time [sec] \\
      \hline
       MP (ours) & $\boldsymbol{585.05 \pm 84.78}$  \\  
       rAMDL & ${802.25 \pm 74.13}$  \\
       iSAM2 & ${1764.25 \pm 17.05}$ 
     
    \end{tabular}
    \caption{    \label{tab: RealMPVsRest}Total planning time in seconds (lower is better), number of factors in the prior graph is 8220, number of paths evaluated is 500. For all scenarios the same best action was calculated. Therefore, the objective function value was identical for all methods (-12884.88).}
  \end{center}
\end{table}

 \begin{figure} [!h] 
	\includegraphics[width=\columnwidth]{./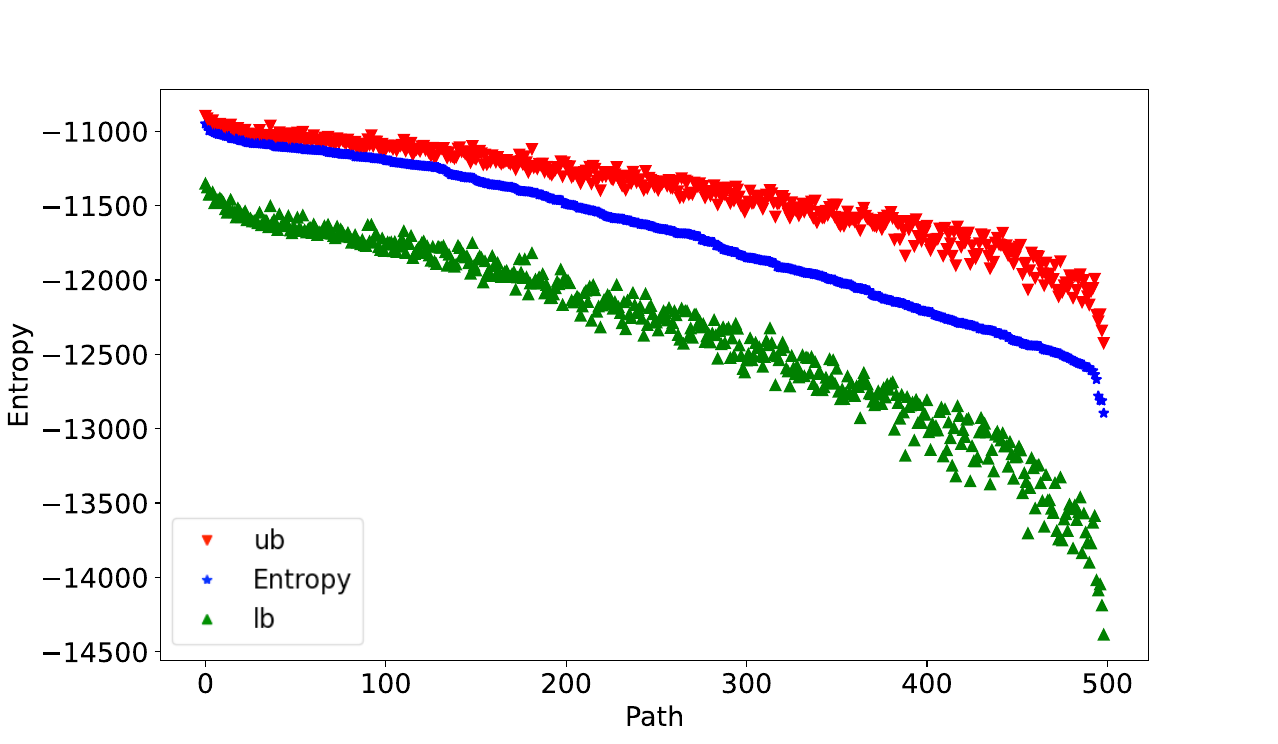}
	\scriptsize
	\caption{\scriptsize Robotmaster experiment.}
	\normalsize
	\vspace{6pt}
	\label{fig: ExperimentBounds}
\end{figure}

\section{CONCLUSIONS} \label{sec: conclusions}
We introduced the novel concept of observation space partitioning for BSP and POMDP planning problems. The concept is general and applies to all belief distributions and the underlying POMDP spaces. It allows for a more efficient way of identifying the optimal action by using a simplification paradigm and forming analytical bounds on the expected sum of rewards.
We have demonstrated one possible use-case of this concept by studying a typical active SLAM scenario with Gaussian beliefs. We have shown that both for simulated and real-world experiments, our method has a faster planning running time when compared to other state of the art methods, while identifying the same optimal trajectory.

\subsection{Limitations and Future Work}
While the theoretical foundation of the proposed method is general, its implementation has some limitations. Tackling such limitations can form the basis for future research. The method presented is applicable for information-theoretic rewards (specifically, Entropy), which are typically the computational bottleneck compared to state-dependent rewards. It is not hard however, to extend this approach to other information-theoretic rewards such as Information Gain.    
   Additionally, future work can evolve in multiple different directions, which might include; implementations for non-parametric belief distributions, an extension to policies over action-sequences, and the possible partitioning of other POMDP spaces.

\section{ACKNOWLEDGMENT}
The authors would like to thank Anton Gulyaev for his  help with the real-world experiments and the sensitivity study. 

\addtolength{\textheight}{-12cm}   


\bibliographystyle{plain}
\bibliography{/Users/tomyotam/research/references/refs.bib}

\end{document}